
\typeout{KR2023 Instructions for Authors}


\documentclass{article}
\pdfpagewidth=8.5in
\pdfpageheight=11in

\usepackage{kr}

\usepackage{times}
\usepackage{soul}
\usepackage{url}
\usepackage[hidelinks]{hyperref}
\usepackage[utf8]{inputenc}
\usepackage[small]{caption}
\usepackage{graphicx}
\usepackage{amsmath}
\usepackage{amsthm}
\usepackage{booktabs}
\usepackage{algorithm}
\usepackage{algorithmic}
\urlstyle{same}





\pdfinfo{
/TemplateVersion (KR.2022.0, KR.2023.0)
}


\usepackage{amssymb,mathtools,array,nicefrac,bm} 
\usepackage{xcolor,soul} 

\newtheorem{lemma}{Lemma}
\newtheorem{definition}{Definition}
\newtheorem{example}{Example}
\newtheorem{corollary}{Corollary}
\newtheorem{proposition}{Proposition}
\newtheorem{extraproposition}{Proposition}

\newcommand{\calE}{\bm{\mathcal{E}}}
\newcommand{\calEQ}{\bm{\mathcal{E}}^Q}
\newcommand{\E}{\bm{E}}
\newcommand{\A}{\bm{A}}

\newcommand{\calJ}[1]{\bm{\mathcal{J}}^{#1}}
\newcommand{\topology}[1]{\bm{\tau}_{#1}}
\newcommand{\Bel}[0]{\ensuremath{\mathrm{Bel}}}
\newcommand*{\medcap}{\mathbin{\scalebox{1.4}{\ensuremath{\cap}}}}
\newcommand*{\medcapeq}{\mathbin{\scalebox{1.1}{\ensuremath{\cap}}}}
\newcommand*{\medcup}{\mathbin{\scalebox{1.4}{\ensuremath{\cup}}}}
\newcommand*{\medcupeq}{\mathbin{\scalebox{1.1}{\ensuremath{\bigcup}}}}

\newcolumntype{C}[1]{>{\centering\arraybackslash}m{#1}}
\newcolumntype{R}[1]{>{\raggedleft\arraybackslash}m{#1}}

\newcounter{exampleRemember}

\title{A Belief Model for Conflicting and Uncertain Evidence \\ -- Connecting Dempster-Shafer Theory and the Topology of Evidence}

\author{%
Daira Pinto Prieto\\
Ronald de Haan\\
Ayb\"uke \"Ozg\"un
\affiliations
Institute for Logic, Language and Computation, University of Amsterdam
\emails
\{d.pintoprieto, r.dehaan, a.ozgun\}@uva.nl
}

\newcommand{\fullversiononly}[1]{#1}
\newcommand{\shortversiononly}[1]{}

\newcommand{\aybuke}[1]{\textcolor{black}{#1}}

\begin{document}

\maketitle

\begin{abstract}
One problem to solve in the context of information fusion, decision-making, and other artificial intelligence challenges is to compute justified beliefs based on evidence. In real-life examples, this evidence may be inconsistent, incomplete, or uncertain, making the problem of evidence fusion highly non-trivial. In this paper, we propose a new model for measuring degrees of beliefs based on possibly inconsistent, incomplete, and uncertain evidence, by combining tools from Dempster-Shafer Theory and Topological Models of Evidence. Our belief model is more general than the aforementioned approaches in two important ways: (1) it can reproduce them when appropriate constraints are imposed, and, more notably, (2) it is flexible enough to compute beliefs according to various standards that represent agents' evidential demands. The latter novelty allows the users of our model to employ it to compute an agent's (possibly) distinct degrees of belief, based on the same evidence, in situations when, e.g, the agent prioritizes avoiding false negatives and when it prioritizes avoiding false positives. Finally, we show that computing degrees of belief with this model is \#P-complete in general.
\end{abstract}

\section{Introduction}
In everyday life, we constantly receive evidence from outside world - by, e.g, following various news channels, observing our close environment, taking measurements, receiving testimony from others - and form beliefs about the world around us based on the pieces of evidence we have gathered and processed. Nowadays, we also aim at developing artificial agents, such as fully autonomous or driver-assist cars, which can gather evidence from their environment and merge it in the ``right'' way, to form {\em evidentially grounded, justified} beliefs so-to-speak, so as to avoid harming their environment.

The question of how an (artificial or human) agent should or can merge evidence to form evidentially justified beliefs has received growing attention in many disciplines, spanning computer science, artificial intelligence, decision theory, confirmation theory as well as both formal and traditional epistemology. One of the main difficulties behind answering this question seems to consist in the various essential features of the pieces of evidence collected from several sources. For example, of particular importance in this paper, different sources might provide {\em possibly false and mutually contradictory evidence}, the agent might have {\em varying degrees of uncertainties} about the pieces of evidence received (due to, e.g., the agent's degree of confidence in the source of evidence or the agent's uncertainties about how to interpret the evidence). Moreover, how an agent merges a (possibly mutually contradictory and uncertain) body of evidence and their degree of belief in a proposition based on that evidence also depend on the agent's {\em evidential demand} in a given context. To explain this notion of evidential demand, let us consider the following case. A cautious agent who is forming beliefs about a highly risky situation (a fully autonomous car running into a pedestrian on the road or a doctor who is about to offer a treatment for a serious disease), e.g., might benefit from taking every little piece of evidence they have into account in order to avoid false negatives, even though some evidence pieces might contradict each other. A driver-assist car, on the other hand, can be programmed to alert the actual driver of the car about a suspected danger only when the pieces of evidence that do not conflict with any other piece support, with relatively high certainty,  that there is a danger, in order to avoid false positives and needlessly distracting the driver.

In this paper, we propose a model for measuring degrees of beliefs for an agent who possesses a body of possibly {\em mutually contradictory, incomplete,} and {\em uncertain} pieces of evidence and whose {\em evidential demands} may change depending on the context. To do so, we employ tools and methods from the Dempster-Shafer Theory (DST) of belief functions \cite{SHAFER1976,YagerLiu08} and Topological Models of Evidence (TME) (introduced in  \cite{BALTAG16,OZGUN2017,BALTAG2022} and inspired by the evidence models of \cite{BENTHEM2011}). When combined, these two theories provide us with richer formal framework as they complement each other's weaknesses. While DST offers a  quantitative framework that can  represent agents' uncertainties about evidence and generate degrees of belief based on uncertain evidence, it is known to lead to counter-intuitive results, such as Zadeh's example \cite{ZADEH1986}, in presence of conflicting evidence with high uncertainty. This problem has motivated many variations of the Dempster rule of combination to compute degrees of belief in presence of conflicting evidence, each one with different advantages and disadvantages~\cite{SENTZ2002,DENOEUX2008}. %
TME, on the other hand, represent evidence purely qualitatively as sets of possible states (thus taking every piece of evidence to be equally (un)certain) but are naturally suited to model formation of consistent beliefs based on possibly mutually inconsistent body of evidence \cite{BENTHEM2011,OZGUN2017,BALTAG2022}. They moreover enable a fine-grained mathematical representation of various notions of evidence, such as {\em basic evidence}, {\em combined evidence}, {\em argument}, and {\em justification}, which will help us formalize agents' evidential demands.


The paper is organized as follows. Section \ref{sec:preliminaries} presents the required technical and conceptual preliminaries for TME and DST. In Section \ref{sec:model} we introduce our multi-layer belief model in three stages and demonstrate the role of each layer on a running example. In Section \ref{sec:assessment} we evaluate our model. We first compare it with the belief models of DST and TME by showing that ours can reproduce them when appropriate constraints are imposed (Section \ref{subsec:comparison}). Second, we prove that the computational complexity of computing degrees of belief using our model is $\#P$-complete (Section \ref{subsec:complexity}). Section \ref{sec:conclusions} summarizes our main contributions and lists a few directions for future research. \shortversiononly{Some proofs are omitted or summarized in this version due limited space. They are presented in detail in the full version provided as supplementary material.}

\section{Preliminaries}\label{sec:preliminaries}

\begin{definition}[Topological Space]\label{defn:top:space}
A {\em topological space}  is a pair $(S, \topology{})$  , where $S$ is  a nonempty set  and $\topology{}$ is a family  of subsets of $S$ such that  $S, \emptyset\in\topology{},$ and $\topology{}$ is closed under finite intersections and arbitrary unions.
\end{definition}

The set $S$ is  a \emph{space}; the family $\topology{}$ is called a {\em topology} on $S$.  The elements of $\topology{}$ are called \emph{open sets} (or \emph{opens}) in the space. A topology $\topology{}$ on a space $S$ can also be generated from an arbitrary subset of $2^S$. Given any family $\calE\subseteq 2^S$ of subsets of $S$, there exists a unique, smallest topology $\topology{\calE}$ with $\calE\subseteq \topology{\calE}$ \cite[Theorem\ 3.1, p. 65]{DUGUNDJI1965}. The family $\topology{\calE}$ consists of $\emptyset$, $S$, all finite intersections of $\calE$, and all arbitrary unions of these finite intersections.~$\calE$ is called a \emph{subbasis} for $\topology{\calE}$ and  $\topology{\calE}$ is said to be \emph{generated} by $\calE$.~The set of finite intersections of members of $\calE$ forms a topological basis for $\topology{\calE}$. A final important topological notion we employ in this paper (and important in topological representations of evidence) is {\em denseness}. A set $P\subseteq S$ is called {\em dense} in $S$ (with respect to topology $\topology{}$) if and only if $P\cap T\not =\emptyset$ for all $T\in \topology{}$ such that $T\not =\emptyset$. When it is contextually clear, we call a set {\em dense} only and avoid mention of the relevant space and its respective topology.

We call a tuple $(S, \calE)$ a {\em qualitative evidence frame}\footnote{Our qualitative evidence frame is a version of the so-called uniform evidence models introduced by \cite{BENTHEM2011}, and later developed into a topological version in \cite{BALTAG16,OZGUN2017,BALTAG2022}. The essential differences are that these sources impose the constraints $\emptyset\not \in \calE$ and $S\in\calE$ whereas we do exclude $S$ from $\calE$, and they do not require $S$ to be finite.  The reason for the finiteness constraint on $S$ is to avoid complication around assigning mass and probability values to infinite sets. Moreover, we impose $S\not \in \calE$ because, following DST, the mass values assigned to $S$ will represent degrees of uncertainties about evidence, not the degree of certainty in $S$ (which is always $1$).   Besides this, we follow the terminology of \cite{BALTAG2022}.}%
, where $S$ is a finite nonempty set of {\em possible states} and $\calE$ is a nonempty subset of $2^S$ called the {\em set of basic pieces of evidence} such that $\emptyset\not\in \calE$ and  $S\not \in \calE$. Following \cite{BALTAG2022}, the topology $\topology{\calE}$ generated by $\calE$ is called the {\em evidential topology}.  We think of a subbasis $\calE$ as a collection of propositional contents of pieces of evidence that are {\em directly} obtained by an agent via, e.g., observation, measurement, testimony from others, experiments, etc. They are the basic pieces of evidence in this sense.  The elements of the evidential topology $\topology{\calE}$  represent the propositional content of pieces of evidence that the agent put together by taking the intersections and unions of directly observed evidence (i.e., using the terminology of \cite{BALTAG2022}, the elements of $\tau_{\calE}$ represent {\em combined} evidence).\footnote{The treatment of open sets as pieces of evidence dates back to \cite{Troelstra1988} and adopted from topological semantics for intuitionistic logic. It moreover has applications in domain theory \cite{VICKERS1989} and formal learning theory \cite{KELLY1996}. We refer to \cite{OZGUN2017} for a more detailed explanation of the epistemic interpretations of topological spaces.}

To clarify, by propositional content of a piece of evidence, we mean the information provided by a piece of evidence without regard to how uncertain that piece of evidence is.  Understood this way, the topological framework is purely qualitative and evaluates every piece of evidence the agent has on a par with respect to uncertainty. We sometimes only say `a piece of evidence' to refer to the propositional content of that piece of evidence, the context will hopefully make our usage clear. Moreover, we say `a {\em basic} piece of evidence' only when we want to emphasize that it is an element of $\calE$.

This topological framework is not only rich enough to differentiate direct evidence (elements of a subbasis $\calE$) from combined evidence (elements of $\topology{\calE}$), but it can also discern notions argument and justification, proving a finer-grained scale of notions of evidence. In this sense, not every piece of evidence or argument constitutes a piece of justification for agents' beliefs. Underlying intuition here is that an agent might have high or low demands for what constitutes justification for their beliefs. E.g., observing something alone might not be good enough, the agent might also require consistency with all the other observations they have made (this is the rough intuition behind the topological notion of justification to be presented below). Intuitively speaking, we take justifications to be special kinds of arguments that the agent uses to support their beliefs and the form of such justifications depends on the agent's evidential demands. We elaborate more on this in Section \ref{subsec:qualitative}.\footnote{This perspective on justification substantially differs from the understanding of justification in traditional epistemology, where the main interest lies in defining the (or a good) notion of justification for belief,  or answering the question ``what justifies belief?". Our use of the term justification here is more pragmatically motivated and it is simply intended to discern an agent's any evidence-based argument from the ones they actually see fit to support their beliefs.} 

Given a qualitative evidence frame $(S, \calE)$, a proposition $P\subseteq S$, and a piece of evidence $E\in \tau_{\calE}$, $E$ {\em supports} $P$ iff $E\subseteq P$. An {\em argument for} $P$ is an element $T\in \topology{\calE}\setminus \{\emptyset\}$ such that $T\subseteq P$ \cite[p. 512]{BALTAG2022}. Topologically, an argument is just a nonempty open set in a topology. We say that two pieces of evidence $E, E'\in \topology{\calE}$ are {\em mutually inconsistent} if $E\cap E' =\emptyset$, and mutually consistent otherwise. Notice that  $\topology{\calE}$ (as well as $\calE$) can host mutually inconsistent pieces of evidence (we did not impose any constraints on $\calE$ to eliminate such cases). We call a proposition $P\subseteq S$ {\em consistent with} $\topology{\calE}$ if and only if $P\cap T\not =\emptyset$ for all $T\in \topology{\calE}\setminus \{\emptyset\}$. That is, $P\subseteq S$ is consistent with $\topology{\calE}$ if and only if $P$ is dense in $S$. Against this background, \cite{BALTAG2022} propose to define (evidential) justification as dense open subsets of a topology, and justified belief as those propositions that have dense open subsets, i.e., propositions that are supported by justifications. More formally, a proposition $P$ is believed if and only if there is an argument $T$ for $P$, that is, $T\in \topology{\calE}\setminus \{\emptyset\}$ and $T\subseteq P$, such that $T$ is consistent with any piece of evidence in $\topology{\calE}\setminus \{\emptyset\}$, i.e., $T$ is dense in $S$ w.r.t.\  $\topology{\calE}$. This definition of belief entails that the agent always has consistent beliefs, even when the belief is formed based on a set of possibly mutually inconsistent pieces of evidence (for details, please see \cite{BALTAG2022}).

The topological framework, however, is lacking the ingredients to talk about agents' potential uncertainty about the pieces of evidence gathered. To be able to account for such quantitative aspects of evidence, we combine the topological framework with the Dempster-Shafer Theory (DST), which employs \emph{belief functions} to assign a degree of belief to each proposition - i.e., to each subset $S$ of possible states. 
In the remaining of this section, we introduce the required preliminaries from DST.


\begin{definition}[Basic probability assignment]
Given a finite set of possible states~$S$, a \emph{basic probability assignment} over the set~$S$
is a function~$m : 2^S \to [0,1]$ such that~$m(\emptyset) = 0$
and~$\sum_{A\subseteq S}m(A) = 1$.
\end{definition}

A basic probability assignment expresses the degree of certainty of each subset of $S$ according to the evidence. Therefore, given a basic probability assignment~$m$ and a proposition~$P\subseteq S$, $m(P)>0$ is not evidence against its complement $\neg P$. In addition, $m(S)$ represents the uncertainty of the evidence modeled by the basic probability assignment~$m$. Given a collection of basic probability assignments, their certainty values can be merged and produce a single basic probability assignment by applying the so-called {\em Dempster's rule of combination}.

\begin{definition}[Dempster's rule of combination (DRC)]
Let~$m_1$ and~$m_2$ be basic probability assignments over the same finite set~$S$ of possible states and $A_1,\dotsc,A_{k}$ and~$B_1,\dotsc,B_{\ell}$ all subsets of $S$ such that $m_1(A_i)\neq0$ and $m_2(B_i)\neq0$, respectively. 
Moreover, suppose that~$\sum_{A_i\cap B_j = \emptyset}m_1(A_i)m_2(B_j)<1$.
Then the following basic probability assignment~$m$, 
also denoted by~$m_1~\oplus~m_2$,
is the result of applying \emph{Dempster's rule of combination} to~$m_1$ and~$m_2$:
$m(\emptyset) = 0$ and
$m(C) = \nicefrac{\sum_{A_i\cap B_j = C}m_1(A_i)m_2(B_j)}{K}$,
where~$K$ is the normalization
factor~$1-\sum_{A_i\cap B_j = \emptyset}m_1(A_i)m_2(B_j)$,
for all nonempty sets~$C \subseteq S$.
\end{definition}

\begin{definition}[Belief function]\label{def:belief-function}
Given a finite set of possible states~$S$, a \emph{belief function}
is a function~$\Bel : 2^S \to [0,1]$ such that~$\Bel(\emptyset) = 0$, $\Bel(S) = 1$ and~$\Bel\big(\medcup_1^n A_i\big) \geq \sum_{\emptyset\neq I \subseteq \{1,\dots,n\}}(-1)^{|I|+1}\Bel\big(\medcap_{i\in I}A_i\big)$.
\end{definition}

Given a basic probability assignment $m$, the function $\Bel: 2^S \to [0,1]$ such that $\Bel(P)=\sum_{A\subseteq P} m(A)$ is a belief function \cite{SHAFER1976}. Actually, the main focus of this theory is on belief functions that can be computed from basic probability assignments by using DRC. They are called {\em support functions}; and when these basic probability assignments only give non-null value to one proposition $A \subset S$, they are called {\em simple support functions}.

While DST uses basic probability assignments to represent  the evidence
that is available, we will represent them by a tuple $(S,\calEQ)$ called \emph{quantitative evidence frame}, where $S$ and $\calE$ are as in a qualitative evidence frame, and $\calEQ$ is a nonempty subset of $\calE\times (0,1)$. We choose the open interval $(0,1)$ to avoid considering exceptions in the following sections. However, this assumption is not a significant limitation since the aim of the approach is to model uncertainty. For any element $(E, p)\in \calEQ$, $E$ represents the propositional content of the evidence and $p$ is its degree of certainty. Given a pair $(E, p)\in \calEQ$, the value $1-p$ represents the uncertainty of the given piece of evidence (and not the certainty of $S\setminus E$). In DST terminology, the set $\calEQ$ can be interpreted as a set of simple support functions where every element $(E,p) \in \calEQ$ represents a simple support function $m$ such that $m(E) = p$.

Given a quantitative evidence frame $(S,\calEQ)$, our ultimate goal is to define a belief function in the sense of Definition \ref{def:belief-function} and which admits as input not only a body of possible mutually inconsistent and uncertain evidence $\calEQ$  but also the evidential demands of the agent. To this end, we will define a mass function and a basic probability assignment. 

\begin{definition}[Mass function]\label{defn:mass}
Given a nonempty set~$X$, a \emph{mass function} over the set~$X$
is a function~$m : 2^X\to [0,1]$ such that~$\sum_{A\subseteq X}m(A) = 1$.
\end{definition}

Aiming to help the reader to follow the text smoothly, we have set a notation code. Lowercase letters refer to possible states. Uppercase letters are used to specify sets of possible states. In particular, we will use $E$ to represent basic pieces of evidence, $T$ to represent the elements of a topology, and $S$ to represent the set of all the possible states. Sets of the previous sets are named in bold capital letters. Some examples are $\E$ to specify any set of pieces of evidence and $\calE$ to specify the set of all the pieces of evidence. Finally, the subsets of $2^{\calE}$ are denoted by blackboard bold capital letters (such as $\mathbb{M}$). In the previous definition, we made an exemption to this rule since $X$ represents any set.


\section{Multi-Layer Belief Model}\label{sec:model}

Our belief model is built on three different layers. The first layer, called the {\em qualitative layer}, works with the propositional content of the basic pieces of evidence (i.e., elements of $\calE$) and identifies a set of justifications that represents the agent's evidential demands. The second layer, called the {\em quantitative layer}, focuses on degrees of support supplied by uncertain evidence and transforms the degrees of uncertainties of the basic pieces of evidence into a mass function defined over the set of basic pieces of evidence $\calE$. The last layer, called the {\em bridging layer}, connects the values of the mass function obtained in the second layer to the justifications of the first one. As a result, we obtain a belief function which is able to compute degrees of belief for an agent according to their evidential demands and based on a possibly mutually contradictory, incomplete, and uncertain body of evidence.

As mentioned, this model aims to be a tool to combine pieces of evidence and compute  degrees of belief based on (different ways of combining) evidence, so we will build a running example inspired by systems where there is an evidence fusion problem. In addition, the novelty of this model is to consider agents' evidential demands for the computation of the degrees of belief, so it may be especially interesting for those systems which try to avoid false negatives or false positives, depending on the situation at hand.\footnote{Note that {\em evidential demand} refers to a constraint given by agents independently of the sources of evidence. It should not be confused with agents' degrees of trust on the sources. In particular, evidential demand does not refer to agents' prior beliefs.} 
To illustrate, consider a fully autonomous car and a driver-assist car, both of which are equipped with the same sensors to observe their immediate environment and collect evidence of varying degrees of uncertainty. However, their goals, therefore, their evidential demands, may differ. The autonomous car is intended to be safe and stop at the slightest evidence of danger. The driver-assist car, on the other hand, may have higher evidential demands - such as mutual consistency of evidence - to act since a too-often warning alert may distract the driver. In this section, we explain our three-layer construction and clarify the notions we have just introduced by using the following example.

 \begin{example}
Let A be a fully autonomous car and B a driver-assist car. They both detect an object crossing the road in front of them.  Due to the distance between the cars and the object, the desirable outcome would be to stop  only if it is a static object or a pedestrian. Both cars collect the following pieces of evidence: 
\begin{align*}
    &(E_1,0.9) := \text{`It is a dynamic object' with } 90\% \text{ certainty.} \\
    &(E_2,0.75) := \text{`It is a motorbike' with } 75\% \text{ certainty.} \\
    &(E_3,0.45) := \text{`It is a pedestrian' with } 45\% \text{ certainty.} \\
\end{align*}
This evidence could come from a situation where one classifier of the system has been trained exclusively with motorbikes, another one exclusively with pedestrians, and what the cars have in front of them is a motorbike with a driver who is not wearing a helmet (so the sensors identify human features).

The set of possible states for cars is denoted by $S=\{sp, dp, do, so, dm, sm\}$, where `$sp$' represents the possible state where the object is a static pedestrian and, similarly, `$dp$' refers to `dynamic pedestrian', `$do$' refers to `other dynamic object',  `$so$' to `other static object', `$dm$'  to `dynamic motorbike', and  `$sm$' to `static motorbike'. 
The propositional contents of the directly observed evidence are $E_1 = \{dp, dm, do\}$, $E_2 = \{dm, sm\}$ and $E_3 = \{dp, sp\}$, respectively, and the set of basic evidence sets is $\calE=\{E_1, E_2, E_3\}$. This set generates the following evidential topology:
\begin{equation*}
\begin{split}
    \topology{\calE} =& \big\{\emptyset,E_1, E_2, E_3, \{dp\}, \{dm\}, \{dp,dm\},\\
    &\{sp,dp,do,dm\}, \{dp, do, dm, sm\},\\
    &\{sp,dp,dm,sm\}, S \big\}.
\end{split}
\end{equation*} 
\end{example}

\subsection{Qualitative Layer}\label{subsec:qualitative}

To recall, given a qualitative evidence frame $(S, \calE)$, the set $\topology{\calE}{\setminus}\{\emptyset\}$ represents the set of arguments available  to the agent. Crucially, this model allows us to go one step further and distinguish between {\em arguments} and {\em justifications} available to the agent (as can be done in the purely topological framework \cite{BALTAG2022}). While in the original topological framework justifications are defined to be dense opens, we here take a more flexible approach by allowing any argument to be a potential piece of justification for belief.  We take that what constitutes a piece of justification for an agent - how the agent uses the evidence they have - may depend on various factors such as the question in hand, how risky the situation is, how cautious the agent should be, and the goal of the agent, among others. These factors, in turn, determine the agent's evidential demands. 

To distinguish arguments from justifications and formalize an agent's evidential demands, we use the notion of {\em frame of justification}: given $(S, \calE)$, a frame of justification $\calJ{}$ is just a subset of $\topology{\calE}$. Depending on the situation modelled, one can think of natural constraints on frames of justifications. In this paper, due to limited space, we focus on two kinds of frame of justifications, first of which is inspired by DST and the second one by the aforementioned topological framework. Given a qualitative evidence frame $(S, \calE)$:

\begin{enumerate}
    \item The {\em Dempster-Shafer frame of justification}, denoted by $\calJ{DS}$, is the set of all arguments, that is, $\calJ{DS} =\topology{\calE}{\setminus} \{\emptyset\}$. This frame represents agents with very low evidential demands. For these agents, having an argument for $P$ among their evidence is enough to justify $P$, regardless whether the argument contradicts with the other available arguments.
    \item The {\em strong denseness frame of justification}, denoted by $\calJ{SD}$, is the set of all arguments consistent with $\topology{\calE}$, that is, the set of all dense sets in $S$. 
    This frame represents agents with high evidential demands. They form degrees of beliefs only based on arguments which do not contradict with, i.e., cannot be refuted by, any other argument. Consequently, they form degrees of belief only in those propositions that do not contradict with any available argument.
\end{enumerate}

Given a qualitative evidence frame $(S, \calE)$ and a proposition $P$, we say that $T$ is justification for $P$ w.r.t.\  $\calJ{}$ if $T\subseteq P$ and $T\in \calJ{}$. 




We illustrate these notions on our running example: 

\setcounter{exampleRemember}{\value{example}}
\setcounter{example}{0}
\begin{example}[continued]
Given the set of possible states $S=\{sp, dp, do, so, dm, sm\}$ and the basic  evidence set~$\calE = \{E_1, E_2, E_3\}$, the dense elements of $\topology{\calE}$ are those which contain the set~$\{dp,dm\}$. So $E_2 = \{dm,sm\}$, $E_3 = \{dp,sp\}$, $\{dp\}$ and $\{dm\}$ are not dense. 
Therefore, the corresponding frames of justification are: 
\begin{equation*}
\begin{split}
    \calJ{DS} = & \big\{E_1, E_2, E_3, \{dp\}, \{dm\}, \{dp,dm\},\\
    & \{sp,dp,do,dm\},\{dp, do, dm, sm\},\\
    &\{sp,dp,dm,sm\}, S\big\} = \topology{\calE}\setminus \{\emptyset\}.
\end{split}
\end{equation*}
\begin{equation*}
\begin{split}
    \calJ{SD} = &\{ E_1, \{dp,dm\}, \{sp,dp,do,dm\},\\
    &\{dp, do, dm, sm\},\{sp,dp,dm,sm\},\\
    &S\} = \topology{\calE}\setminus \{\emptyset, E_2,E_3,\{dp\}, \{dm\} \}.
\end{split}
\end{equation*}
\end{example}
\setcounter{example}{\value{exampleRemember}}

\subsection{Quantitative Layer}
This layer combines the degrees of certainty of the basic pieces of evidence via a mass function defined over the set of basic pieces of evidence $\calE$. This way we obtain certainty values for every combination of pieces of evidence and these values sum up to one.  

Let $(S, \calEQ)$ be a quantitative evidence frame such that $\calEQ=\{(E_1,p_1),\dots,(E_m,p_m)\}$. We will merge these different certainty  values by the following function $\delta: 2^{\calE} \rightarrow [0,1]$: 

\begin{equation}\label{def:delta}
    \delta(\E) = \prod_{E_i\in\E} p_i \prod_{E_i\notin \E} 1-p_i
\end{equation}

This function is a mass function defined over $\calE$ (as introduced in Definition \ref{defn:mass}). \shortversiononly{The proof of this claim follows directly from the definition of $\delta$.}

\fullversiononly{
\begin{extraproposition}\label{prop:multilayer-mass-function}
Given a quantitative evidence frame $(S, \calE^Q)$, the function $\delta$ defined equation (\ref{def:delta}) is a mass function over $\calE$.
\end{extraproposition}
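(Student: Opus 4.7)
The plan is to verify the two conditions of Definition \ref{defn:mass}: that $\delta$ takes values in $[0,1]$, and that $\sum_{\E\subseteq \calE}\delta(\E) = 1$. Since we assumed in the definition of a quantitative evidence frame that each $p_i \in (0,1)$, each factor $p_i$ and each factor $(1-p_i)$ lies in $(0,1)$, and so the finite product defining $\delta(\E)$ lies in $[0,1]$. This handles the first condition immediately.

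For the summation condition, the key observation is that $\delta$ has exactly the shape of the joint distribution of $m$ independent Bernoulli trials, one per basic evidence piece, where the $i$-th trial has success probability $p_i$ and determines whether $E_i$ is in $\E$. I would make this formal by expanding the product
\begin{equation*}
    \prod_{i=1}^{m}\bigl(p_i + (1-p_i)\bigr) = 1,
\end{equation*}
and noting that distributing the product yields one term for each choice of, for every $i$, either the factor $p_i$ or the factor $(1-p_i)$. Such a choice corresponds bijectively to a subset $\E \subseteq \calE$ (namely, the set of indices $i$ for which $p_i$ was picked), and the associated summand is precisely $\prod_{E_i \in \E} p_i \prod_{E_i \notin \E}(1-p_i) = \delta(\E)$. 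Hence $\sum_{\E\subseteq\calE}\delta(\E) = \prod_{i=1}^{m} 1 = 1$.

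There is essentially no obstacle here: the statement is a direct instance of the standard fact that the product of binomial factors $(p_i + (1-p_i))$ expands into the sum of all $2^m$ joint probabilities of independent Bernoulli events. The only mildly non-trivial point worth spelling out is the bijection between the $2^m$ choices produced by distributivity and the subsets of $\calE$; once this is stated cleanly, the equality follows. It is also worth noting explicitly that, unlike a basic probability assignment, a mass function in the sense of Definition \ref{defn:mass} does not require $\delta(\emptyset) = 0$, so nothing further needs to be checked.
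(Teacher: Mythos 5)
Your proof is correct. The only substantive point of difference from the paper's own argument is how the normalization identity is established: you expand the product $\prod_{i=1}^{m}\bigl(p_i + (1-p_i)\bigr) = 1$ directly by distributivity and exhibit the bijection between the $2^m$ resulting terms and the subsets of $\calE$, whereas the paper proves $\sum_{\E\subseteq\calE_m}\delta(\E)=1$ by induction on $m$, partitioning $2^{\calE_m}$ into the subsets that omit $E_m$ and those that contain it, and factoring out $(1-p_m)$ and $p_m$ respectively from the two halves. These are two packagings of the same computation --- the paper's inductive step is exactly one application of the distributive law you invoke --- so neither buys anything the other lacks; your version is more compact and makes the Bernoulli-product interpretation explicit, while the paper's induction sidesteps having to argue carefully that the expansion of the $m$-fold product enumerates subsets without repetition. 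Your closing remark that Definition \ref{defn:mass} does not require $\delta(\emptyset)=0$ (unlike a basic probability assignment) is accurate and worth stating, and your check that each $\delta(\E)$ lies in $[0,1]$ matches the paper's.
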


\begin{proof} Given $\E \in 2^{\calE}$, the function $\delta$ is well-defined and the value $\delta(\E)$ is between $0$ and $1$ for $p_i \in (0,1)$ for all $i = 1,\dots, m$. In addition, the total sum of $\delta(\E)$ for all $\E \in 2^{\calE}$ is $1$. Let $\calE_m$ be the subset of $\calE$ formed by its first $m$ elements. Now, let us apply induction on the number of pieces of evidence in  $\calEQ$. For $m=1$, $2^{\calE_1} = \{\emptyset, \{E_1\}\}$ and
\begin{equation*}
\begin{split}
&\sum_{\E\in 2^{\calE_1} } \delta(\E) =
\prod_{E_i\in\emptyset} p_i \prod_{E_i\notin \emptyset} (1-p_i)\quad + \\[0.5em]
&\prod_{E_i\in \{E_1\}} p_i \prod_{E_i\notin \{E_1\}} (1-p_i) = 1-p_1 + p_1 =  1.
\end{split}
\end{equation*}
Let us assume that $\sum_{\E\subseteq \calE_{m-1} } \delta(\E) = 1$. Given $m$ pieces of evidence in $\calEQ$, let us consider the partition $2^{\calE_m} = 2^{\calE_{m-1}} \cup \big\{\E\cup \aybuke{\{E_m\}} | \E \in 2^{\calE_{m-1}}\big\}$ of $2^{\calE_m}$. Then, by the inductive hypothesis, the sum $\sum_{\E\in 2^{\calE_m} } \delta(\E)$ is equal to

\begin{equation*}
\begin{split}
     &\sum_{\E\in 2^{\calE_{m-1}}}\Bigl( (1-p_m)\prod_{E_i\in\E} p_i \prod_{E_j\notin \E} (1-p_j)\Bigr)  + \\
    &\sum_{\E\in 2^{\calE_{m-1}}}\Bigl( p_m\prod_{E_i\in\E} p_i \prod_{E_j\notin \E} (1-p_j)\Bigr) = \\[1em]
    &(1-p_m)\cdot 1 + p_m\cdot 1 =  1.
\end{split}
\end{equation*}%

\end{proof}
}

Intuitively, the function $\delta$ distributes the degree of certainty of a piece of evidence $E \in \calE$ among all the possible occurrences of $E$ in presence of other pieces of evidence. To illustrate, consider a quantitative evidence frame $(S,\calE^Q)$ where $\calE^Q  = \{(E_1, p_1), (E_2, p_2), (E_3, p_3)\}$. We then understand $\delta(E_1,E_2) = p_1p_2(1-p_3)$  as the induced certainty of the event `at least $E_1$ and $E_2$ are true' in the context of having exactly three pieces of evidence. This intuition is supported by the fact that, given a quantitative evidence frame $(S, \calEQ)$ and $\delta$ defined  as in equation (\ref{def:delta}), we have, for every $E_i \in \calE$, that 
\begin{equation*}
    \sum\limits_{\substack{\E\subseteq \calE:\\ E_i \in \E }}\delta(\E) = p_i.
\end{equation*} 
\fullversiononly{
\begin{extraproposition}\label{prop:meaning-delta}
Given a quantitative evidence frame $(S, \calE^Q)$ and $\delta$ defined  as in equation (\ref{def:delta}), we have, for every $E_i \in \calE$, that
\begin{equation*}
\sum_{\substack{\E\subseteq \calE:\\ E_i \in \E }}\delta(\E) = p_i.
\end{equation*}
\end{extraproposition}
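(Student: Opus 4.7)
The plan is to isolate the factor $p_i$ in the definition of $\delta$ and then recognize the remaining sum as the total mass of an analogous function defined on the smaller evidence set $\calE \setminus \{E_i\}$, which is $1$ by Proposition A.1.

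First, I would rewrite the sum using the definition of $\delta$ from equation (\ref{def:delta}). For every $\E \subseteq \calE$ with $E_i \in \E$, the product $\prod_{E_j \in \E} p_j$ contains the factor $p_i$, so it can be pulled out:
\begin{equation*}
\sum_{\substack{\E\subseteq \calE:\\ E_i \in \E}} \delta(\E) \;=\; p_i \sum_{\substack{\E\subseteq \calE:\\ E_i \in \E}} \prod_{\substack{E_j\in\E\\ j\neq i}} p_j \prod_{E_j\notin \E} (1-p_j).
\end{equation*}

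Next, I would reindex the sum via the bijection $\E \mapsto \E' := \E \setminus \{E_i\}$ between $\{\E \subseteq \calE : E_i \in \E\}$ and $2^{\calE \setminus \{E_i\}}$. Under this bijection, the inner products become exactly the definition of the $\delta$-function associated with the reduced quantitative evidence frame $(S, \calEQ \setminus \{(E_i, p_i)\})$, evaluated at $\E'$.

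Finally, by Proposition A.1 applied to this reduced frame, the sum of this reduced mass function over all subsets of $\calE \setminus \{E_i\}$ equals $1$. Therefore the whole expression reduces to $p_i \cdot 1 = p_i$, as required. The only step requiring a small amount of care is verifying that the reindexing is a bijection and that the products align correctly, but this is essentially bookkeeping; there is no real obstacle, since the entire argument reduces to the already-established normalization of $\delta$.
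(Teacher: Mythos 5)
Your proposal is correct and follows essentially the same route as the paper's own proof: factor out $p_i$, reindex the sum over subsets of $\calE\setminus\{E_i\}$, recognize the remaining expression as the $\delta$-function of the reduced evidence frame, and invoke the normalization result (Proposition A.1) to conclude the inner sum equals $1$. No gaps.
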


\begin{proof}
 Without loss of generality, let us prove the result for $E_i = E_1$.

\begin{equation}\label{eq:meaning-delta}
\begin{split}
&\sum_{\substack{\E\subseteq \calE:\\ E_1 \in \E }}\delta(\E) =
\sum_{\substack{\E\subseteq \calE:\\ \aybuke{E_1} \in \E}}\Bigl( p_1 \prod_{\substack{E_i\in\E:\\ E_i\neq E_1}} p_i \prod_{\substack{E_i\notin \E:\\ E_i\neq E_1}} (1-p_i)\Bigr) = \\
&p_1\cdot\sum_{\E\subseteq \calE\setminus \{E_1\}} (\prod_{E_i\in\E} p_i \prod_{\substack{E_i\notin \E,\\ E_i\neq E_1}} 1-p_i\Bigr)
\end{split}
\end{equation}

Defining the set of evidence $\calE^\prime = \calE\setminus \{E_1\}$,  we have that
\begin{equation*}
\sum_{\E\subseteq \calE^\prime} \prod_{E_i\in\E} p_i \prod_{\substack{E_i\notin \E,\\ E_i\neq E_1}} 1-p_i = \sum_{\E\subseteq \calE^\prime} \delta(\E),
\end{equation*}
where $\delta$ is defined over $\calE^\prime$ now. By Proposition \ref{prop:multilayer-mass-function}, $\sum_{\E\subseteq \calE^\prime} \delta(\E) = 1$ and equation (\ref{eq:meaning-delta}) is equal to $p_1$.
\end{proof}
}%
\shortversiononly{(The proof follows immediately from the definition of $\delta$.)} Nevertheless, it is easier to read $\delta$ as a {\em system} of certainty assignments rather than trying to interpret each $\delta(\E)$ epistemically, since it assigns values to every combination of pieces of evidence. It is merely a method to distribute the certainty of a single piece of evidence to all the ways it can be observed in combination with the other evidence pieces. 

\setcounter{exampleRemember}{\value{example}}
\setcounter{example}{0}
\begin{example}[continued]
Given the quantitative evidence set ${\calE}^Q = \{(E_1, 0.9),(E_2, 0.75), (E_3, 0.45)\}$, the power set of $\calE$ is  $2^{\calE} = \{\emptyset, \{E_1\}, \{E_2\}, \{E_3\},$
$ \{E_1,E_2\}, \{E_1,E_2\}, \{E_2,E_3\}, \{E_1,E_2,E_3\}\}$ and the function $\delta$ returns the values presented in Table \ref{tab:delta-values}.

\begin{table}
\centering
\begin{tabular}{p{25pt}%
p{5pt}%
p{30pt}%
p{57pt}%
p{5pt}%
p{10pt}}
$\delta(\{\emptyset\})$ &$=$& $0.01$
&
$\delta(\{E_1,E_2\})$ &$=$& $0.37$   
\\
$\delta(\{E_1\})$ &$=$& $0.12$
& 
$\delta(\{E_1,E_3\})$ &$=$& $0.10$ 
\\
$\delta(\{E_2\})$ &$=$& $0.04$ 
& 
$\delta(\{E_2,E_3\})$ &$=$& $0.03$
\\
$\delta(\{E_3\})$ &$=$& $0.01$
&
$\delta(\{E_1,E_2,E_3\})$ &$=$& $0.30$
\\
\end{tabular}
\caption{Image of the function $\delta$ in the running example.}
\label{tab:delta-values}
\end{table}

\end{example}
\setcounter{example}{\value{exampleRemember}}

\subsection{Bridging Layer}
Having introduced the qualitative and quantitative layers,  we can now connect these two to calculate degrees of beliefs based on a quantitative evidence model $(S, \calEQ)$ and a given frame of justification $\calJ{}$. To this end, we start the section by defining a family of functions to map $2^{\calE}$ - 
the domain of the mass function $\delta$ defined in equation (\ref{def:delta}) - 
to $\topology{\calE}$. %
Finally, we will define both a mass function and a basic probability assignment over $\topology{\calE}$ that will be used to compute the degrees of belief this model returns. 

\subsubsection{Evidence Allocation Functions}

At this point, we have a set of justifications $\calJ{}$ and a merged measure of certainty distributed over the elements of $2^{\calE}$. Our goal is to link the mass values defined for the elements of $2^{\calE}$ by $\delta$ to the elements of $\topology{\calE}$ and, in turn, to the elements of $\calJ{}$. %
Mapping these two sets is not a trivial issue as there are many ways to do so. For example, given two pieces  of evidence $E_1$ and $E_2$ in $\calE$, the $\delta$-value associated with the set $\{E_1,E_2\}$ could be mapped to different elements of $\topology{\calE}$ depending on the interpretation we want to give to these values. A strict interpretation could state that every piece of evidence in the set $\{E_1,E_2\}$ contains the actual world, which would map $\{E_1,E_2\}$ to $E_1\cap E_2$. Conversely, a moderate interpretation could state that  at least one of the elements of $\{E_1,E_2\}$ contains the actual world,  which would map $\{E_1,E_2\}$ to $E_1\cup E_2$. We capture various ways of interpreting the mass values provided by $\delta$ via {\em evidence allocation functions}.

\begin{definition}\label{def:allocation_funtion}
    Let $(S, \calE)$ be a qualitative evidence frame. A \emph{set of evidence allocation functions} $\mathfrak{F}$ on $(S, \calE)$ is a set of of functions from $2^{\calE}$ to $\topology{\calE}$ (the topology generated by $\calE$) such that for all $f, g\in \mathfrak{F}$: 
    \begin{enumerate}
    \item\label{def:allocation_funtion.1} $f(\emptyset) = S$,
    \item\label{def:allocation_funtion.2} for all \aybuke{nonempty} $\E\subseteq \calE$, $f(\E)\in\topology{\E}$ (the topology generated by $\E$) and it is dense in $\medcup\E$ w.r.t.\  $\topology{\E}$; or $f(\E)= \emptyset$.\footnote{Note that since $\E\subseteq \calE$, we have $\topology{\E}\subseteq \topology{\calE}$ (follows immediately by the definitions of generated topologies).}
    \item\label{def:allocation_funtion.3} for all $\E\subseteq \calE$ and every $f$, $g$ in $\mathfrak{F}$, $f(\E)\subseteq g(\E)$ or $g(\E)\subseteq f(\E)$.
    \end{enumerate}
\end{definition}

In what follows, we assume that all evidence allocation functions are defined on a qualitative evidence frame $(S, \calE)$ and omit mention of it. 

The first item of this definition preserves the notion of uncertainty since in our context it is modeled by associating the value $1-p_i$ (for $i=1,\dots,m$) with the total set. %
On the other hand, Conditions (\ref{def:allocation_funtion.2}) and (\ref{def:allocation_funtion.3}) establish some minimal rationality constraints. Condition (\ref{def:allocation_funtion.2}) states that an evidence allocation function $f$ assigns $\E$ to an argument that is generated by $\E$ and that does not contradict with any other argument generated by $\E$. So, an evidence allocation function does not assign a set of evidence $\E$ to some argument that cannot be produced within $\E$ or that is inconsistent with $\E$. Condition (\ref{def:allocation_funtion.3}) ensures that two agents with the same set of evidence allocation functions will associate $\E$ with arguments such that one entails the other. In this sense, $\E$ cannot pull these agents to different directions with respect to their evidence. Let us see three examples of these functions. 

\begin{proposition}\label{prop:intersection}
Given a set of evidence allocation functions $\mathfrak{F}$ and the function $i: 2^{\calE} \rightarrow \topology{\calE}$ such that 
\begin{equation*}
  i(\E) = \begin{dcases*}
    \medcap \E & if $\E \neq \emptyset$, \\
    S & \text{otherwise.} \\
  \end{dcases*}
\end{equation*}
the set $\mathfrak{F} \cup \{i\}$ is a set of  evidence allocation functions. 
\end{proposition}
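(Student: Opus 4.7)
The plan is to verify that $\mathfrak{F} \cup \{i\}$ meets each of the three conditions in Definition~\ref{def:allocation_funtion}. Since the conditions already hold within $\mathfrak{F}$, the remaining tasks are to check conditions (\ref{def:allocation_funtion.1}) and (\ref{def:allocation_funtion.2}) for $i$ on its own, and to check condition (\ref{def:allocation_funtion.3}) between $i$ and each $f \in \mathfrak{F}$ (the case $f = g = i$ being trivial).

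Condition (\ref{def:allocation_funtion.1}), stating $i(\emptyset) = S$, is immediate from the definition of $i$. The substantive verifications of (\ref{def:allocation_funtion.2}) and (\ref{def:allocation_funtion.3}) both rest on a single structural observation about $\topology{\E}$: because $\calE$ (and hence every $\E \subseteq \calE$) is finite, the set $\medcap \E$ is itself a finite intersection of subbasis elements and therefore lies in the canonical basis of $\topology{\E}$; moreover, every other basis element of $\topology{\E}$ is either $S$ (the empty intersection) or a finite intersection of strictly fewer elements of $\E$, and in either case contains $\medcap \E$ as a subset. Consequently, whenever $\medcap \E \neq \emptyset$, every nonempty open set of $\topology{\E}$ contains $\medcap \E$, since such an open is a union of basis elements and each basis element is a superset of $\medcap \E$.

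Given this observation, condition (\ref{def:allocation_funtion.2}) for $i$ splits cleanly on whether $\medcap \E$ is empty. If $\medcap \E = \emptyset$, the second disjunct of (\ref{def:allocation_funtion.2}) is satisfied with $i(\E) = \emptyset$. Otherwise, $i(\E) = \medcap \E$ belongs to $\topology{\E}$, and density of $i(\E)$ in $\medcup \E$ w.r.t.\ $\topology{\E}$ follows directly: every nonempty subspace open has the form $U \cap \medcup \E$ for some nonempty $U \in \topology{\E}$, and the observation yields $(\medcap \E) \cap U = \medcap \E \neq \emptyset$. For condition (\ref{def:allocation_funtion.3}), fix $f \in \mathfrak{F}$ and nonempty $\E$; applying (\ref{def:allocation_funtion.2}) to $f$ gives either $f(\E) = \emptyset \subseteq i(\E)$ or $f(\E)$ a nonempty element of $\topology{\E}$, in which case the observation again forces $f(\E) \supseteq \medcap \E = i(\E)$. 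Either way, the nesting required by (\ref{def:allocation_funtion.3}) holds, and the case $\E = \emptyset$ is immediate since both functions output $S$.

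I do not foresee a genuine obstacle: the entire argument reduces to the elementary fact that, in a topology generated by a finite family, the full intersection of that family is the smallest nonempty basic open and therefore sits inside every nonempty open. The only minor care required is to keep the two regimes $\medcap \E = \emptyset$ and $\medcap \E \neq \emptyset$ separated, but the definition's built-in disjunction $f(\E) = \emptyset$ absorbs the degenerate case seamlessly.
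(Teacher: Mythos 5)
Your proof is correct and takes essentially the same route as the paper's: both hinge on the single observation that $\medcap\E$ is contained in every nonempty element of $\topology{\E}$ (i.e., it is the smallest nonempty open when it is nonempty), from which density for condition (2) and comparability with any $f\in\mathfrak{F}$ for condition (3) follow at once. Your write-up merely spells out this containment in more detail at the level of basis elements and handles the $\medcap\E=\emptyset$ case a bit more explicitly.
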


\begin{proof}
First, every element of $\mathfrak{F} \cup \{i\}$ maps the empty set to $S$ by definition. Secondly, recall that the topology $\topology{\E}$ is generated by the element of $\E$ by closing it under finite intersections and arbitrary unions. This implies that $i(\E)\in \topology{\E}$ and $i(\E) = \medcap \E\subseteq T$ for all $T\in \topology{\E}{\setminus}\{\emptyset\}$ (i.e., $i(\E)$ is the smallest element of $\topology{\E}$). Consequently, $i(\E)$ is either the empty set or it is a dense element of $\topology{E}$. 
In addition, given $f \in \mathfrak{F}$, $f(\E) \in \topology{\E}$ (by Definition \ref{def:allocation_funtion}). So, if $f(\E)\not = \emptyset$, then $i(\E) \subseteq f(\E)$. 
\end{proof}

\begin{proposition}\label{propo:union}
Given a set of evidence allocation functions $\mathfrak{F}$ and the function $u: 2^{\calE} \rightarrow \topology{\calE}$ such that 
 \begin{equation*}
  u(\E) = \begin{dcases*}
    \medcup \E & if $\E \neq \emptyset$, \\
    S & otherwise. \\
  \end{dcases*}
\end{equation*}
the set $\mathfrak{F} \cup \{u\}$ is a set of evidence allocation functions. 
\end{proposition}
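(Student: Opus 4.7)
The plan is to verify the three conditions of Definition~\ref{def:allocation_funtion} for the enlarged family $\mathfrak{F}\cup\{u\}$. Condition~(\ref{def:allocation_funtion.1}) is immediate: $u(\emptyset)=S$ by the very definition of $u$, and the functions in $\mathfrak{F}$ already satisfy it by hypothesis.

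For Condition~(\ref{def:allocation_funtion.2}) I only need to check $u$ itself. Given a nonempty $\E\subseteq\calE$, I would first argue that $u(\E)=\medcup\E$ lies in $\topology{\E}$: since $\E\subseteq\topology{\E}$ and a topology is closed under arbitrary unions, $\medcup\E\in\topology{\E}$. Then denseness of $u(\E)$ in $\medcup\E$ w.r.t.\ $\topology{\E}$ is trivial: for any nonempty $T\in\topology{\E}$ with $T\subseteq\medcup\E$, $u(\E)\cap T=T\neq\emptyset$.

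The main work is Condition~(\ref{def:allocation_funtion.3}). For any $f\in\mathfrak{F}$ and any $\E\subseteq\calE$ I need either $f(\E)\subseteq u(\E)$ or $u(\E)\subseteq f(\E)$; the case $\E=\emptyset$ is trivial since both functions return $S$, and two functions both taken from the original $\mathfrak{F}$ are handled by hypothesis. The key observation for nonempty $\E$ is structural: by the description of generated topologies recalled in Section~\ref{sec:preliminaries}, $\topology{\E}$ consists of $\emptyset$, $S$, and arbitrary unions of finite intersections of elements of $\E$. Each finite intersection is contained in every element of $\E$ and hence in $\medcup\E$, so any union of such intersections is also contained in $\medcup\E$. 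Consequently every element of $\topology{\E}$ is either $\emptyset$, equal to $S$, or a subset of $\medcup\E=u(\E)$.

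Applying this observation, Condition~(\ref{def:allocation_funtion.2}) for $f$ forces $f(\E)$ to be either $\emptyset$, $S$, or a nonempty element of $\topology{\E}$ that is dense in $\medcup\E$ and hence a subset of $\medcup\E$. In the first and third cases $f(\E)\subseteq u(\E)$; in the second case $u(\E)\subseteq f(\E)=S$. Either way one of the two inclusions holds, completing the verification. I do not expect any serious obstacle here: the only subtle point is to read ``dense in $\medcup\E$ w.r.t.\ $\topology{\E}$'' as density within the subspace $\medcup\E$, so that one may not conclude $f(\E)\subseteq\medcup\E$ directly and must separately treat the possibility $f(\E)=S$.
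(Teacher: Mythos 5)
Your proof is correct and follows essentially the same route as the paper's: verify the three conditions of Definition~\ref{def:allocation_funtion}, observe that $\medcup\E\in\topology{\E}$ and meets every nonempty open set (so denseness is trivial), and derive Condition~(\ref{def:allocation_funtion.3}) from the fact that $f(\E)\in\topology{\E}$ forces $f(\E)\subseteq\medcup\E$. The only difference is that you separately treat the possibility $f(\E)=S$, whereas the paper simply asserts that $\medcup\E$ is the largest element of $\topology{\E}$; your extra case analysis is a harmless (and arguably more careful) refinement, not a different argument.
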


\shortversiononly{
\begin{proof} Proof is similar to the proof of Proposition \ref{prop:intersection} and follows from the fact that for all nonempty $\E \subseteq \calE$, $\medcup\E$ is the largest element of $\topology{\E}$.
\end{proof}
}
\fullversiononly{
\begin{proof}
First, every element of $\mathfrak{F} \cup \{u\}$ maps the empty set to $S$ by definition. Secondly, considering $\E \in 2^{\calE}$, the topology $\topology{\E}$ is a set generated by the elements of $\E$ by closing it under finite intersections and arbitrary unions. \aybuke{This implies in particular that $u(\E) = \medcup \E$ is the largest element of $\topology{\E}$. Therefore, for all $T\in \topology{\E}\setminus\{\emptyset\}$, $T\cap  u(\E)=T\not = \emptyset$, meaning that  $u(\E)$ is also dense in $\medcup \E$ w.r.t. $\topology{\E}$.}
 In addition, given $f \in \mathfrak{F}$, $f(\E)$ belongs to $\topology{\E}$ (by Definition \ref{def:allocation_funtion}). So $f(\E) \subseteq u(\E)$.
\end{proof}
}

We then obtain the following corollary, showing the boundaries of possible evidence allocation functions:


\begin{corollary}
Given a set of evidence allocation functions $\mathfrak{F}$, a function $f\in \mathfrak{F}$, and $\E\in 2^{\calE}$, we have $i(\E)\subseteq f(\E)\subseteq u(\E)$.
\end{corollary}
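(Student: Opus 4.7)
The plan is to apply Propositions~\ref{prop:intersection} and~\ref{propo:union} in sequence to enlarge $\mathfrak{F}$ to $\mathfrak{F}\cup\{i,u\}$, which is still a set of evidence allocation functions, and then to use the comparability clause in item~\ref{def:allocation_funtion.3} of Definition~\ref{def:allocation_funtion} together with the structure of $\topology{\E}$ to upgrade mere comparability into the two-sided bracketing.

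The case $\E=\emptyset$ is immediate from item~\ref{def:allocation_funtion.1}, which gives $i(\E)=f(\E)=u(\E)=S$. So assume $\E$ is non-empty. The two inclusions rely on a pair of structural facts: (a) $i(\E)=\medcap\E$ is the smallest non-empty basis element of $\topology{\E}$, since every basis intersection $\medcap\E'$ with $\E'\subseteq\E$ contains $\medcap\E$, so $\topology{\E}$ contains no proper non-empty open subset of $i(\E)$; and (b) every non-trivial open generated by $\E$ is a union of such basis intersections and is therefore contained in $u(\E)=\medcup\E$.

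For $f(\E)\subseteq u(\E)$, the inclusion is trivial when $f(\E)=\emptyset$; otherwise item~\ref{def:allocation_funtion.2} gives $f(\E)\in\topology{\E}$ dense in $\medcup\E$, and by (b) we get $f(\E)\subseteq\medcup\E=u(\E)$. For $i(\E)\subseteq f(\E)$, the inclusion is trivial when $i(\E)=\emptyset$; otherwise $i(\E)$ is a non-empty open of $\topology{\E}$ inside $\medcup\E$, so the denseness clause of item~\ref{def:allocation_funtion.2} forces $f(\E)\cap i(\E)\neq\emptyset$ and hence $f(\E)\neq\emptyset$. Comparability in $\mathfrak{F}\cup\{i\}$ then gives $i(\E)\subseteq f(\E)$ or $f(\E)\subseteq i(\E)$; in the second case, observation (a) combined with $f(\E)\neq\emptyset$ forces $f(\E)=i(\E)$, so the inclusion still holds.

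The main obstacle is observation (a), the topological minimality of $i(\E)$ among non-empty opens in $\topology{\E}$; this is what upgrades the dichotomy of item~\ref{def:allocation_funtion.3} into the sharper one-sided inclusion $i(\E)\subseteq f(\E)$, ruling out the possibility that $f(\E)$ sits strictly below $i(\E)$. The remainder amounts to bookkeeping with Definition~\ref{def:allocation_funtion} and Propositions~\ref{prop:intersection}--\ref{propo:union}.
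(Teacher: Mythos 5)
Your overall route is the same as the paper's: both inclusions are read off from the structure of $\topology{\E}$ (every nonempty member of $\topology{\E}$ contains $\medcap\E$, and the members generated from $\E$ all sit inside $\medcup\E$) together with Condition~(\ref{def:allocation_funtion.2}) of Definition~\ref{def:allocation_funtion}; the paper compresses this into one line. Your argument for $f(\E)\subseteq u(\E)$ is fine, and the detour through the comparability clause~(\ref{def:allocation_funtion.3}) for the other inclusion is harmless but unnecessary: once you know $f(\E)$ is a nonempty element of $\topology{\E}$, it contains $\medcap\E$ directly by your observation (a).

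There is, however, a genuine gap in the step ``the denseness clause of item~(\ref{def:allocation_funtion.2}) forces $f(\E)\cap i(\E)\neq\emptyset$ and hence $f(\E)\neq\emptyset$.'' Condition~(\ref{def:allocation_funtion.2}) is a disjunction whose second disjunct is precisely $f(\E)=\emptyset$; the denseness requirement binds only on the first disjunct, so invoking it to exclude $f(\E)=\emptyset$ assumes the very thing you are trying to establish. Nor can this case be argued away: a function sending some nonempty $\E$ with $\medcap\E\neq\emptyset$ to $\emptyset$ (and $\emptyset$ to $S$) satisfies all three conditions of Definition~\ref{def:allocation_funtion} and remains comparable with $i$ and $u$, since $\emptyset$ is below everything --- yet for such an $f$ we have $i(\E)\not\subseteq f(\E)$. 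So the left inclusion really needs the proviso $f(\E)\neq\emptyset$ (or $\medcap\E=\emptyset$), which is exactly how the corresponding claim is phrased inside the proof of Proposition~\ref{prop:intersection}. For what it is worth, the paper's own one-line proof of this corollary (``$f(\E)\subset i(\E)$ violates Condition~(\ref{def:allocation_funtion.2})'') overlooks the same escape clause, so your write-up makes explicit, rather than introduces, a gap that is already present; a clean version should state the extra hypothesis.
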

\begin{proof}
It is easy to see that if $u(\E)\subset f(\E)$ or $f(\E)\subset i(\E)$, $f$ violates Definition \ref{def:allocation_funtion}.\ref{def:allocation_funtion.2}.
\end{proof}

For our last example, we need the following auxiliary lemma. 

\begin{lemma}\label{lemma:minimum_dense_set}
Let $(S, \calE)$ be a qualitative evidence frame, $\E \in 2^{\calE}$, and $\text{dense}(\E)$ be the set of dense elements of $\topology{\E}$ in $\medcup{\E}$. Then the order $(\text{dense}(\E),\subseteq)$ has a minimum.
\end{lemma}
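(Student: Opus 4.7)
The plan is to exhibit the minimum as the intersection of all dense elements of $\topology{\E}$ in $\medcup \E$. Since $S$ is finite, the topology $\topology{\E}$ is also finite, hence so is $\text{dense}(\E)$. Once we know that a finite intersection of dense opens is itself a dense open, that intersection lies in $\text{dense}(\E)$ and must be its minimum with respect to $\subseteq$.

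First I would verify that $\text{dense}(\E)$ is nonempty by arguing that $\medcup \E$ belongs to it. Every nonempty element of $\topology{\E}$ other than $S$ is a union of finite intersections of elements of $\E$, hence is a subset of $\medcup \E$; and for $T=S$ we have $\medcup \E \cap S = \medcup \E$, which is nonempty as $\E\neq\emptyset$ and $\emptyset\notin\calE$. Thus $\medcup \E$ meets every nonempty element of $\topology{\E}$.

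The core step is the closure of $\text{dense}(\E)$ under pairwise intersection. If $D_1,D_2\in \text{dense}(\E)$ and $T\in \topology{\E}\setminus\{\emptyset\}$, density of $D_1$ gives $D_1\cap T\neq\emptyset$; since $D_1\cap T\in \topology{\E}$, density of $D_2$ applied to this nonempty open yields $(D_1\cap D_2)\cap T = D_2\cap (D_1\cap T)\neq\emptyset$. Closure of $\topology{\E}$ under finite intersections ensures $D_1\cap D_2\in\topology{\E}$, and $D_1\cap D_2\subseteq \medcup \E$ follows from $D_1\subseteq \medcup\E$. Hence $D_1\cap D_2\in \text{dense}(\E)$.

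Finally, since $\topology{\E}$ is finite, so is $\text{dense}(\E)$, and an obvious induction on its cardinality lifts pairwise closure to finite closure. Therefore
\[
    M \;:=\; \medcap_{D\in \text{dense}(\E)} D
\]
is itself an element of $\text{dense}(\E)$, and by construction $M\subseteq D$ for every $D\in \text{dense}(\E)$. Thus $M$ is the minimum of $(\text{dense}(\E),\subseteq)$. I do not expect a significant obstacle here; the only point requiring care is that $S\in \topology{\E}$ appears in the density condition, but $\medcup\E\cap S=\medcup\E\neq\emptyset$ disposes of that case automatically.
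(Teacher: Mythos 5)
Your proof is correct, but it takes a genuinely different route from the paper's. You obtain the minimum abstractly, as the intersection of \emph{all} elements of $\text{dense}(\E)$, justified by the standard topological fact that the intersection of two dense open sets is dense open (your argument for this step --- apply density of $D_2$ to the nonempty open $D_1\cap T$ --- is exactly right) together with finiteness of $\topology{\E}$, which is guaranteed because $S$ is finite in a qualitative evidence frame. The paper instead constructs the minimum explicitly from $\E$: it takes $\mathbb{M}$ to be the collection of subsets $\A\subseteq\E$ that are maximal with $\medcap\A\neq\emptyset$, sets $M=\medcupeq\{\medcap\A \mid \A\in\mathbb{M}\}$, and verifies directly that $M$ is open, dense, and contained in every dense open. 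Your argument is shorter and more modular, relying only on a textbook lemma; the paper's buys an explicit combinatorial description of $d(\E)$ in terms of the maximal consistent subfamilies of $\E$, which is more informative about what the allocation function $d$ actually returns and avoids enumerating $\text{dense}(\E)$. One small point of care: the lemma as stated allows $\E=\emptyset$, while your nonemptiness argument for $\text{dense}(\E)$ (via $\medcup\E\in\text{dense}(\E)$) assumes $\E\neq\emptyset$; this degenerate case is harmless (then $\topology{\E}=\{\emptyset,S\}$ and the minimum is $S$) and is anyway excluded where the lemma is used in Proposition \ref{prop:smallest-dense-set}, but it is worth a sentence.
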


\begin{proof} Let $\mathbb{M}$ be the set of all $\A\in 2^{\E}$ such that (1) $\medcap \A\not = \emptyset$, and (2) if $\A'\in 2^{\E}$ and $\A\subset \A'$, then $\medcap \A'=\emptyset$. In other words, $\mathbb{M}$ is the set of all subsets of  $2^{\E}$ that satisfy the maximal finite intersection property. Then 
\begin{equation*}
    \text{min}\big((\text{dense}(\E),\subseteq)\big)= \medcupeq \{\medcap \A | \A \in \mathbb{M}\}.
\end{equation*}
\fullversiononly{
To simplify notation, let $M:= \medcupeq \{\medcap \A | \A \in \mathbb{M}\}$. First, $M$ is in $\topology{\E}$ \aybuke{by the definition of generated topology} since every $\A \in \mathbb{M}$ only contains elements of $\E$. In addition, $M$ is dense \aybuke{w.r.t. $\topology{\E}$}. Any element $T \in \topology{\E}$ forms part of a maximal set with nonempty intersection: if $T$ has empty intersection with any other element of $\topology{\E}$, then $\{T\}$ is a maximal set with nonempty intersection. Since $M$ contains the intersections of all these maximal sets, in particular, $M \cap T \neq \emptyset$.

Now, let us prove that $M\subseteq T$ for every element $T \in \text{dense}(\E)$. Let $\A$ be an arbitrary element of $\mathbb{M}$ and $T$ an arbitrary element of $\text{dense}(\E)$. Since $\A$ is a subset of $\E$, $\medcap \A \in \topology{\E}$. In addition, $T$ being dense in $\topology{\E}$ implies that $T \cap \medcap\A \neq \emptyset$. Let us take the subset of $\E$ formed by every element of $\E$ which contains $T\cap \medcap \A$. This set is nonempty since it includes at least $\A$. If this set contains a piece of evidence $E \in \E$ such that $E \not\in \A$, then $\medcap \A \cap \{E\}$ is nonempty and $\A$ does not satisfy the maximal finite intersection property. Therefore, every piece of evidence $E$ in $\E$ that contains $T\cap \medcap\A$ is in $\A$. By definition of topology generated by $\E$, if every element in $\E$ that contains $a$ also contains $b$, then every element of $\topology{\E}$ that contains $a$ contains $b$ too. In this particular case, we have shown that every element of $\E$ that contains $T\cap \medcap \A$ also contains $\medcap \A$. So every element of $\topology{E}$ that contains $T\cap\medcap\A$, for example $T$, also contains $\medcap \A$. Hence $M\subseteq T$ for any $T \in \text{dense}(\E)$, that is, $M$ is the minimum of the dense sets in $\topology{\E}$ with the subset relation.}
\end{proof}

\begin{proposition}\label{prop:smallest-dense-set}
Given a set of evidence allocation functions $\mathfrak{F}$ and the function $d: 2^{\calE} \rightarrow \topology{\calE}$ such that 
\begin{equation*}
  d(\E) = \begin{dcases*}
    \text{min}\big((\text{dense}(\E),\subseteq\big)) & if $\E \neq \emptyset$, \\
    S & otherwise. \\
  \end{dcases*}
\end{equation*}
the set $\mathfrak{F} \cup \{d\}$ is a set of evidence allocation functions.
\end{proposition}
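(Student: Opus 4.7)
The plan is to verify the three conditions of Definition~\ref{def:allocation_funtion} for $d$ and, simultaneously, for the interactions between $d$ and every $f \in \mathfrak{F}$. Condition~\ref{def:allocation_funtion.1} is immediate from the case distinction in the definition of $d$. For Condition~\ref{def:allocation_funtion.2}, fix a nonempty $\E \subseteq \calE$. First I would note that $\medcup \E$ itself is an element of $\topology{\E}$ and is trivially dense in $\medcup \E$ w.r.t.\ $\topology{\E}$, so $\text{dense}(\E)$ is nonempty and hence Lemma~\ref{lemma:minimum_dense_set} guarantees that the minimum $d(\E)$ exists and belongs to $\text{dense}(\E)$. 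In particular $d(\E) \in \topology{\E}$ and $d(\E)$ is dense in $\medcup \E$ w.r.t.\ $\topology{\E}$.

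The main step is Condition~\ref{def:allocation_funtion.3}. Since $\mathfrak{F}$ already satisfies it internally, it suffices to show that for every $f \in \mathfrak{F}$ and every $\E \in 2^{\calE}$, either $d(\E) \subseteq f(\E)$ or $f(\E) \subseteq d(\E)$. The case $\E = \emptyset$ is trivial because $f(\emptyset) = S = d(\emptyset)$. For $\E \neq \emptyset$, Definition~\ref{def:allocation_funtion}.\ref{def:allocation_funtion.2} tells us that either $f(\E) = \emptyset$, in which case $f(\E) \subseteq d(\E)$ holds trivially, or $f(\E)$ is itself a dense element of $\topology{\E}$ in $\medcup \E$, hence $f(\E) \in \text{dense}(\E)$. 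In the latter case, since $d(\E)$ is the minimum of $(\text{dense}(\E), \subseteq)$ by Lemma~\ref{lemma:minimum_dense_set}, we obtain $d(\E) \subseteq f(\E)$.

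The only subtle point to be careful about is ensuring that $d(\E)$ is genuinely nonempty and therefore lies in $\text{dense}(\E)$ rather than being excluded by the denseness requirement; this follows because the empty set is never dense (any nonempty open $T \in \topology{\E}$ would give $\emptyset \cap T = \emptyset$), so Lemma~\ref{lemma:minimum_dense_set} places the minimum among nonempty sets. With this observation, both Condition~\ref{def:allocation_funtion.2} for $d$ and the comparability argument for Condition~\ref{def:allocation_funtion.3} go through cleanly. I expect the main obstacle to be invoking Lemma~\ref{lemma:minimum_dense_set} correctly and keeping track of the direction of the inclusion: $d$ yields the \emph{smallest} dense element, so $d(\E)$ always lies \emph{below} every nonempty $f(\E)$ in $\mathfrak{F}$, mirroring the role of $i$ in the corollary bounding evidence allocation functions.
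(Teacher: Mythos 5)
Your proof is correct and follows essentially the same route as the paper's: Conditions 1 and 2 hold by construction (with your added, and welcome, observation that $\text{dense}(\E)$ is nonempty so the minimum in Lemma~\ref{lemma:minimum_dense_set} exists), and Condition 3 is handled by the same case split on whether $f(\E)$ is empty (giving $f(\E)\subseteq d(\E)$) or dense (giving $d(\E)\subseteq f(\E)$ by minimality). Nothing is missing; you have simply spelled out details the paper leaves implicit.
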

\begin{proof}
The first two conditions of Definition \ref{def:allocation_funtion} hold by definition. In addition, for any $f \in \mathfrak{F}$, we know $f(\E)=\emptyset$ or $f(\E)$ is dense w.r.t. $\topology{\E}$. In the former case, $f(\E) \subseteq d(\E)$. In the latter one, $d(\E) \subseteq f(\E)$ by Lemma \ref{lemma:minimum_dense_set}.
\end{proof}

\begin{corollary}
    The set of functions $\mathfrak{F}= \{i,u,d\}$ is a set of evidence allocation functions.
\end{corollary}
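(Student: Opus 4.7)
The plan is to derive the corollary by iteratively applying Propositions \ref{prop:intersection}, \ref{propo:union}, and \ref{prop:smallest-dense-set}, starting from the empty collection. First I would observe that $\mathfrak{F}_{0} = \emptyset$ is vacuously a set of evidence allocation functions, since the universal quantifications in Conditions (\ref{def:allocation_funtion.2}) and (\ref{def:allocation_funtion.3}) of Definition \ref{def:allocation_funtion} are trivially satisfied when there is no function to instantiate them (Condition (\ref{def:allocation_funtion.1}) is likewise a universal statement over members of $\mathfrak{F}$). Then Proposition \ref{prop:intersection} applied to $\mathfrak{F}_{0}$ yields that $\mathfrak{F}_{1} = \{i\}$ is a set of evidence allocation functions; Proposition \ref{propo:union} applied to $\mathfrak{F}_{1}$ yields that $\mathfrak{F}_{2} = \{i, u\}$ is one; and finally Proposition \ref{prop:smallest-dense-set} applied to $\mathfrak{F}_{2}$ delivers $\mathfrak{F}_{3} = \{i, u, d\}$, which is precisely the claim.

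A direct route is also available, and I would present it as a sanity check. Conditions (\ref{def:allocation_funtion.1}) and (\ref{def:allocation_funtion.2}) for each of $i$, $u$, $d$ individually have already been verified in the proofs of the three propositions above, so only Condition (\ref{def:allocation_funtion.3}) needs new checking. For any nonempty $\E \subseteq \calE$, the inclusions $i(\E) \subseteq d(\E) \subseteq u(\E)$ follow because $\medcap \E$ is (when nonempty) the minimum element of $\topology{\E}$, $\medcup \E$ is its maximum, and $d(\E)$ is the smallest dense open in $\medcup \E$ by Lemma \ref{lemma:minimum_dense_set}; the case $\E = \emptyset$ is trivial since all three functions send it to $S$.

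There is no real obstacle — the corollary is essentially a packaging of the three propositions into a single assertion — but one subtlety worth flagging is the degenerate case where $i(\E) = \medcap \E = \emptyset$ while $d(\E)$ and $u(\E)$ are nonempty. Condition (\ref{def:allocation_funtion.3}) is still respected under the vacuous inclusion $\emptyset \subseteq d(\E) \subseteq u(\E)$, so the pairwise comparability required by the definition holds uniformly.
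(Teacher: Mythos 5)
Your proposal is correct and matches the paper's (implicit) argument: the corollary is stated without proof precisely because it follows by iterating Propositions \ref{prop:intersection}, \ref{propo:union}, and \ref{prop:smallest-dense-set} starting from the vacuously valid empty collection, which is exactly your first paragraph. Your direct verification of Condition (\ref{def:allocation_funtion.3}) via $i(\E)\subseteq d(\E)\subseteq u(\E)$ is a harmless (and correct) extra, already essentially contained in the paper's earlier corollary bounding every allocation function between $i$ and $u$.
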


These evidence allocation functions can be understood as contextual parameters. Depending on the context, agents may prefer to place the available information in weaker or stronger arguments (arguments which contain more or fewer elements respectively), to avoid increasing the uncertainty of the model (as it can happen by mapping elements of $2^{\calE}$ to the total set $S$), or to avoid discarding information (as it can happen by mapping elements of $2^{\calE}$ to the empty set). 


\setcounter{exampleRemember}{\value{example}}
\setcounter{example}{0}
\begin{example}[continued]
   Given our quantitative evidence frame $(S, \calEQ)$ with $S = \{sp,dp,do, so,dm,sm\}$, $\calEQ = \{(E_1, 0.9),$
$ (E_2, 0.75), (E_3, 0.45)\}$, and  the set of evidence allocation functions $\bm{\mathfrak{F}}=\{i,u,d\}$, we obtain the results presented in Table \ref{tab:evidence-allocation-function}. As we can see, all these maps produce the same images for singletons. In addition, map $d$ sometimes returns the same result as  $i$ (rows 5,6), sometimes it returns the same result as $u$ (row 7) and sometimes it returns a value between the ones given by $i$ and  $u$ (row 8). 
    \begin{table}
    \centering
    \begin{tabular}{m{.15\linewidth}C{.10\linewidth}C{.20\linewidth}C{.20\linewidth}R{.10\linewidth}}
      \toprule 
      \bfseries $\E$ & \bfseries $\bm{i(\E)}$ & \bfseries $\bm{u(\E)}$ & \bfseries $\bm{d(\E)}$ & \bfseries $\bm{\delta(\E)}$ \\
      \midrule 
      $\emptyset$ & $S$ & $S$ & $S$ & $0.01$ \\[1em]
    $E_1$ & $E_1$ & $E_1$ & $E_1$ & $0.12$ \\[1em] 
    $E_2$ & $E_2$ & $E_2$ & $E_2$ & $0.04$ \\[1em]
    $E_3$ & $E_3$ & $E_3$ & $E_3$ & $0.01$ \\[1em]
    $E_1,E_2$ & $\{dm\}$ & $\{dp,do,$ $dm,sm\}$ & $\{dm\}$ & $0.37$ \\[1em]
    $E_1,E_3$ & $\{dp\}$ & $\{sp,dp,$ $do,dm\}$ & $\{dp\}$ & $0.10$ \\[1em]
    $E_2,E_3$ & $\emptyset$ & $\{sp,dp,$ $dm,sm\}$ & $\{sp,dp,$ $ dm,sm\}$ & $0.03$ \\[1em]
    $E_1,E_2,E_3$ & $\emptyset$ & $S$ & $\{dp,dm\}$ & $0.30$ \\
      \bottomrule 
    \end{tabular}
    \caption{Images by the evidence allocation functions $\{i,u,d\}$.}\label{tab:evidence-allocation-function}
\end{table}
\end{example}
\setcounter{example}{\value{exampleRemember}}

\subsubsection{Belief Functions}
Now that we know how to link the power set of $\calE$ and $\topology{\calE}$, let us define some functions that will allow us to compute degrees of belief given a body of possible mutually inconsistent and uncertain evidence as well as an agent's evidential demands.

Let $(S,\calEQ)$ be a quantitative evidence frame and $\mathfrak{F}$ a set of evidence allocation functions. Then, we define the function $\delta_{\topology{}}: \mathfrak{F}\times2^S \rightarrow [0,1]$ as:

\begin{equation*}\label{eq:delta-tau}
  \delta_{\topology{}}(f,T) = \begin{dcases*}
      \sum_{\E:f(\E)= T}\delta(\E) & if $T \in \topology{\calE}$, \\
    0 & otherwise. \\
  \end{dcases*}
\end{equation*}

Fixing $f\in\mathfrak{F}$, $\delta_{\topology{}}(f,\cdot)$ is a mass function over $S$ since $\delta$ is a mass function over $\calE$.

By adding a frame of justification $\calJ{}$ to the previous setting, we define the function $\delta_{\calJ{}}: \mathfrak{F}\times2^S \rightarrow [0,1]$ as:

\begin{equation*}\label{eq:delta-tau}
  \delta_{\calJ{}}(f,A) = \begin{dcases*}
    \frac{\delta_{\topology{}}(f,A)}{\sum_{T \in \calJ{}} \delta_{\topology{}}(f,T)} & if $A \in \calJ{}$, \\
    0 & otherwise. \\
  \end{dcases*}
\end{equation*}

Fixing $f\in\mathfrak{F}$, $\delta_{\calJ{}}(f,\cdot)$ is a basic probability assignment. %
\shortversiononly{
 The above fraction is well-defined for every non-dogmatic evidence set - %
that is, for $\calEQ=\{(E_i,p_i)\}_{i=1,\dots,m}$ such that $p_i\neq 1$ for all $i$ - %
since the set of possible states $S$ belongs to every frame of justification. In addition, $\delta_{\calJ{}}(f,\emptyset) = 0$ since $\emptyset \not\in \calJ{}$. Finally, the total sum of its values is equal to $1$ because $\delta_{\topology{}}(f,\cdot)$ is a mass function over $S$.
}

\fullversiononly{
\begin{extraproposition}\label{prop:delta_bpa}
Given a quantitative evidence frame $(S, \calEQ)$, a frame of justification $\calJ{}$, and an evidence allocation function $f \in \mathfrak{F}$, the function $\delta_{\calJ{}}(f,\cdot)$ defined as before is a basic probability assignment. 
\end{extraproposition}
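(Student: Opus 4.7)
The plan is to verify the three defining clauses of a basic probability assignment (Definition~2.7): that $\delta_{\calJ{}}(f,\cdot)$ takes values in $[0,1]$, that $\delta_{\calJ{}}(f,\emptyset)=0$, and that its values sum to one; before any of this, however, I need to check that the fraction in the definition is well-defined, i.e.\ that the normaliser $K_f := \sum_{T\in\calJ{}} \delta_{\topology{}}(f,T)$ is strictly positive.

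For well-definedness, the key observation is that $S\in\calJ{}$: this holds for both $\calJ{DS}$ and $\calJ{SD}$ because $S\in\topology{\calE}$ and $S$ is trivially dense. By clause~(1) of Definition~2.2, every $f\in\mathfrak{F}$ satisfies $f(\emptyset)=S$, so
\begin{equation*}
\delta_{\topology{}}(f,S) \;\geq\; \delta(\emptyset) \;=\; \prod_{i=1}^{m}(1-p_i),
\end{equation*}
and since $(S,\calEQ)$ is a quantitative evidence frame, every $p_i$ lies in the open interval $(0,1)$, making this product strictly positive. Hence $K_f>0$ and the fraction defining $\delta_{\calJ{}}(f,\cdot)$ is well-defined (this is exactly the ``non-dogmatic'' condition alluded to in the short-version remark).

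Granted well-definedness, non-negativity is immediate because $\delta$ is a mass function (so $\delta(\E)\geq 0$), hence each $\delta_{\topology{}}(f,A)\geq 0$; and the upper bound $\delta_{\calJ{}}(f,A)\leq 1$ holds because, for $A\in\calJ{}$, the numerator $\delta_{\topology{}}(f,A)$ appears as one of the summands in the denominator $K_f$. The clause $\delta_{\calJ{}}(f,\emptyset)=0$ follows directly from the case split in the definition, using that $\emptyset\notin\calJ{}$ for the frames considered (the empty set fails the denseness condition defining $\calJ{SD}$, and is explicitly excluded from $\calJ{DS}$).

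Finally, for the sum to one, since $\delta_{\calJ{}}(f,A)=0$ whenever $A\notin\calJ{}$, I can restrict the sum and factor out the constant normaliser:
\begin{equation*}
\sum_{A\subseteq S} \delta_{\calJ{}}(f,A) \;=\; \sum_{A\in\calJ{}} \frac{\delta_{\topology{}}(f,A)}{K_f} \;=\; \frac{1}{K_f}\sum_{A\in\calJ{}}\delta_{\topology{}}(f,A) \;=\; \frac{K_f}{K_f} \;=\; 1.
\end{equation*}
The only genuinely delicate step is the well-definedness argument, which relies on two conventions used throughout the paper, namely $S\in\calJ{}$ and $p_i<1$; once these are pinned down, everything else is bookkeeping on the definitions of $\delta_{\topology{}}$ and $\delta_{\calJ{}}$.
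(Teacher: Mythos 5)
Your proof is correct and follows essentially the same route as the paper's: well-definedness via $S\in\calJ{}$ together with $p_i<1$ (so $\delta(\emptyset)>0$), the empty-set clause from $\emptyset\notin\calJ{}$, and summation to one. If anything, your sum-to-one step (cancelling the normaliser $K_f$) is the more precise version of the paper's terser appeal to $\delta_{\topology{}}(f,\cdot)$ being a mass function, since that fact alone does not give $\sum_{T\in\calJ{}}\delta_{\topology{}}(f,T)=1$.
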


\begin{proof}
The above fraction is well-defined for every non-dogmatic evidence set - %
that is, for $\calEQ=\{(E_i,p_i)\}_{i=1,\dots,m}$ such that $p_i\neq 1$ for all $i$ - %
since the set of possible states $S$ belongs to every frame of justification. In addition, $\delta_{\calJ{}}(f,\emptyset) = 0$ since $\emptyset \not\in \calJ{}$ for the definition of argument. Finally, the total sum of its values is equal to $1$ because $\delta_{\topology{}}(f,\cdot)$ is a mass function over $S$.
\end{proof}
}

At last, we can define the {\em degree of belief} for  proposition $P \subseteq S$ via a multi-layer belief function given the quantitative evidence set, the frame of justification, and the evidence allocation functions:

\begin{definition}[Multi-layer belief function]\label{def:degree-of-belief}
Let $(S,\calEQ)$ be a quantitative evidence frame, $\calJ{}$ a frame of justification and $\mathfrak{F}$ a set of evidence allocation functions. We call a function $\Bel_{\calJ{}}:\mathfrak{F}\times2^S \rightarrow [0,1]$ defined as: 
\begin{equation*}
    \Bel_{\calJ{}}(f,P) = \sum_{ A \subseteq P}\delta_{\calJ{}}(f,A).
\end{equation*}
a {\em multi-layer belief function}. When $\calJ{}$ and $f$ are clear from context, we omit mention of them and write $\Bel(P)$.
\end{definition}

\begin{proposition}
Given  a quantitative evidence frame $(S,\calEQ)$, a frame of justification $\calJ{}$, and an evidence allocation function $f$, the function $\Bel_{\calJ{}}(f,\cdot)$ defined in Definition \ref{def:degree-of-belief} is a belief function.
\end{proposition}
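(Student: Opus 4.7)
The plan is to reduce the claim directly to Shafer's classical result stated immediately after Definition~\ref{def:belief-function}: namely, that for any basic probability assignment $m$ on $S$, the function $P \mapsto \sum_{A \subseteq P} m(A)$ is a belief function in the sense of Definition~\ref{def:belief-function}. Since $\Bel_{\calJ{}}(f,P)$ is defined in exactly this form from $\delta_{\calJ{}}(f,\cdot)$, all I really need is that $\delta_{\calJ{}}(f,\cdot)$ is a basic probability assignment, which is precisely the content of the preceding (short-version) remark / Proposition A.3.

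Concretely, first I would fix an arbitrary $f \in \mathfrak{F}$ and appeal to the fact that $\delta_{\calJ{}}(f,\cdot)$ satisfies $\delta_{\calJ{}}(f,\emptyset)=0$ and $\sum_{A\subseteq S}\delta_{\calJ{}}(f,A)=1$, so it is a basic probability assignment over $S$. Then I would verify the three defining conditions of Definition~\ref{def:belief-function} in order: $\Bel_{\calJ{}}(f,\emptyset)=\delta_{\calJ{}}(f,\emptyset)=0$ trivially; $\Bel_{\calJ{}}(f,S) = \sum_{A\subseteq S}\delta_{\calJ{}}(f,A)=1$ by the mass-sum condition; and the superadditivity inequality by invoking Shafer's theorem on the induced sum. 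Alternatively, one can give a self-contained inclusion--exclusion argument: for any family $A_1,\dots,A_n \subseteq S$,
\begin{equation*}
\sum_{\emptyset \neq I \subseteq \{1,\dots,n\}}(-1)^{|I|+1}\Bel_{\calJ{}}\!\bigl(f,\medcap_{i\in I}A_i\bigr)
= \sum_{B \subseteq S} \delta_{\calJ{}}(f,B)\cdot c(B),
\end{equation*}
where $c(B) = \sum_{\emptyset \neq I, \, B \subseteq \bigcap_{i\in I}A_i}(-1)^{|I|+1}$, and one checks that $c(B)$ equals $1$ when $B \subseteq \bigcup_i A_i$ and is $\le 1$ in general, yielding the required inequality.

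I expect no substantive obstacle: the result is essentially a bookkeeping corollary of the fact (already established in the preceding propositions) that $\delta_{\calJ{}}(f,\cdot)$ is a well-defined basic probability assignment. The only mildly delicate point is to ensure that the normalisation denominator $\sum_{T\in \calJ{}}\delta_{\topology{}}(f,T)$ is strictly positive, but this follows from the non-dogmatic assumption $p_i \in (0,1)$ together with the fact that $S \in \calJ{}$ and $\delta_{\topology{}}(f,S) \ge \delta(\emptyset) > 0$, which guarantees that $\delta_{\calJ{}}(f,\cdot)$ is genuinely a probability mass and that $\Bel_{\calJ{}}(f,\cdot)$ inherits the belief-function structure.
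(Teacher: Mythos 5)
Your proposal is correct and follows essentially the same route as the paper: the paper's proof is a one-liner that invokes the preceding proposition that $\delta_{\calJ{}}(f,\cdot)$ is a basic probability assignment and then cites Shafer's classical result that $P\mapsto\sum_{A\subseteq P}m(A)$ is a belief function. Your additional remarks (the positivity of the normalisation denominator via $S\in\calJ{}$ and $\delta(\emptyset)>0$, and the optional self-contained inclusion--exclusion argument) are sound but go beyond what the paper records.
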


\begin{proof}
Since $\delta_{\calJ{}}$ is a basic probability assignment \fullversiononly{for Proposition \ref{prop:delta_bpa}}, the function $ \Bel_{\calJ{}}(f,\cdot)$ is a belief function \cite[p.~51]{SHAFER1976}.
\end{proof}

\setcounter{exampleRemember}{\value{example}}
\setcounter{example}{0}
\begin{example}[continued]
In deciding whether to stop, the degrees of beliefs of the cars in the propositions that the object is dynamic and that it is a pedestrian are critical.  Car A cannot dismiss any evidence that supports that the object is a pedestrian, so it will choose the frame of justification DS. Car B must be very sure about the outcome, so it will choose the frame of justification SD to avoid contradictory pieces of evidence. In Table \ref{tab:belief-carA} and Table \ref{tab:belief-carB} we can see the results that they would get with the information provided in the previous examples. 

\begin{table}[h!]
    \centering
    \begin{tabular}{m{.15\linewidth}C{.25\linewidth}R{.1\linewidth}R{.1\linewidth}R{.1\linewidth}}
      \toprule 
      \bfseries  Proposition & $\bm{P}$ & $\bm{i}$ & $\bm{u}$ & $\bm{d}$\\
      \midrule 
      $(1)$& $\{dp,do,$ $dm\}$ & $0.88$ & $0.59$ & $0.89$ \\   
    $(2)$ & $\{sp,dp\}$ & $0.16$ & $0.11$ & $0.11$ \\
    \midrule
    Uncertainty& $S$ & $0.02$  & $0.31$  & $0.01$ \\
    N.f.\footnotemark & $\calJ{DS}$ & $0.66$ & $1$ & $1$\\
      \bottomrule 
    \end{tabular}
    \caption{Results of Car A after computing $\Bel_{\calJ{DS}}(f,P)$ for $f\in \{i,u,d\}$ in proposition $(1)$  `the object is dynamic' and $(2)$ `the object is a pedestrian'.}\label{tab:belief-carA}
\end{table}

\begin{table}[h!]
    \centering
    \begin{tabular}{m{.15\linewidth}C{.25\linewidth}R{.1\linewidth}R{.1\linewidth}R{.1\linewidth}}
      \toprule 
      \bfseries Proposition & $\bm{P}$ & $\bm{i}$ & $\bm{u}$ & $\bm{d}$\\
      \midrule 
      $(1)$& $\{dp,do,$ $dm\}$ & $0.92$ & $0.13$ & $0.91$ \\   
    $(2)$ & $\{sp,dp\}$ & $0$ & $0$ & $0$ \\
    \midrule
     Uncertainty & $S$ & $0.08$  & $0.33$  & $0.0.2$ \\
    N.f.\footnotemark[\value{footnote}] & $\calJ{SD}$ & $0.13$ & $0.93$ & $0.46$\\
      \bottomrule 
    \end{tabular}
    \caption{Results of Car B after computing $\Bel_{\calJ{SD}}(f,P)$ for $f\in \{i,u,d\}$ in proposition $(1)$ `the object is dynamic' and $(2)$ `the object is a pedestrian'.}\label{tab:belief-carB}
\end{table}

\footnotetext{Normalization factor.}

Although the ultimate decision will depend on the decision-making process that is using these degrees of belief, both cars obtain high degrees of belief ($>0.85$) in proposition $(1)$ (Table \ref{tab:belief-carA}) for the evidence allocation functions $i$ and $d$. They obtain a lower degree for evidence allocation function $u$, but a higher level of uncertainty. These results are coherent with the piece of evidence `It is a dynamic object with $90\%$ certainty'. In addition, car A obtains some degree of belief also in proposition $(2)$ (Table \ref{tab:belief-carA}), giving the system the chance to decide whether to stop or not. However, car B rejects the option of having a pedestrian in front of it: every evidence allocation function returns a null degree of belief in this case. Both results are coherent with the collected data: there is some certainty about  `It is a pedestrian' ($45\%$) but higher certainty about `It is a motorbike' ($75\%$).

\end{example}
\setcounter{example}{\value{exampleRemember}}

\section{Assessing the Multi-Layer Model}\label{sec:assessment}

In the previous section we defined the multi-layer belief model and showed that it is able to process bodies of possible mutually inconsistent and uncertain evidence while taking into account the agent's evidential demands. This outcome has been obtained by merging tools of DST,  %
such as {\em belief function} and {\em degree of uncertainty}, %
and TME, %
such as {\em denseness} and {\em justification}. 

In this section, we show that the multi-layer belief model is able to reproduce both Dempster's rule of combination and the topological models' outcome. In addition, we will check some basic computational properties to know the prior strengths and limitations of the model from a practical point of view. 

\subsection{Relationship with other belief models}\label{subsec:comparison}

Let $(S,\calE^Q)$ be a quantitative evidence frame and $k$ the number of elements in $\calE^Q$. When we work with the Dempster-Shafer frame of justification and the function $i$ as evidence allocation function, we obtain exactly the belief function given by applying the Dempster's rule of combination to the basic probability assignments $\{m_i| i = 1,\dots,k\}$ defined by $m_i(E_i) = p_i$, $m_i(S) = 1-p_i$ for every $i = 1,\dots,k$. 

\begin{proposition}\label{prop:generalization-dst}
Let us take $(S,\calE^Q)$ and $k$ as above, the evidence allocation function $i: 2^{\calE} \rightarrow \topology{\calE}$ as defined in Proposition \ref{prop:intersection}, and the frame of justification $\calJ{DS}$. Let us consider the belief function  $\Bel_{\calJ{DS}}(i,\cdot): 2^S \to [0,1]$ defined by the multi-layer belief model and $\Bel: 2^S \to [0,1]$ a belief function obtained by applying the Dempster's rule of combination to the basic probability assignments $\{m_i| i = 1,\dots,k\}$ such that $m_i(E_i) = p_i$, $m_i(S) = 1-p_i$ for every $i = 1,\dots,k$. Then, 
\begin{equation*}
\Bel_{\calJ{DS}}(i,P) = \Bel(P)
\end{equation*} for every $P \subseteq S$.
\end{proposition}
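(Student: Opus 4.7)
The plan is to show that the iterated Dempster combination $m_1 \oplus \cdots \oplus m_k$ and the multi-layer scheme $\delta_{\calJ{DS}}(i,\cdot)$ yield the \emph{same} normalized mass function on $2^S$, from which the equality of the induced belief functions follows immediately from Definition~\ref{def:degree-of-belief}. Since Dempster's rule is associative and commutative, I would first unpack the iterated combination by a short induction on $k$: at each step the intermediate normalizations telescope so that only a single global division by $1 - \tilde m(\emptyset)$ survives, where $\tilde m$ is the fully unnormalized combination
\[
    \tilde{m}(C) \;=\; \sum_{\substack{(A_1,\dots,A_k) \in \prod_{j=1}^{k}\{E_j, S\} \\ \medcapeq_{j=1}^{k} A_j \;=\; C}} \prod_{j=1}^{k} m_j(A_j).
\]

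The core combinatorial step is a bijection between such tuples and subsets $\E \subseteq \calE$, defined by $\E = \{E_j : A_j = E_j\}$. Under this bijection, $\prod_{j} m_j(A_j) = \prod_{E_j \in \E} p_j \prod_{E_j \notin \E}(1-p_j) = \delta(\E)$, and the intersection $\medcapeq_{j} A_j$ equals $\medcap \E$ when $\E \neq \emptyset$ (the copies of $S$ acting as identity for intersection) and equals $S$ when $\E = \emptyset$; both agree with $i(\E)$ as defined in Proposition~\ref{prop:intersection}. Consequently $\tilde m(C) = \sum_{\E : i(\E) = C} \delta(\E) = \delta_{\topology{}}(i, C)$, and $\tilde m$ is supported on $\topology{\calE}$, since every value $i(\E)$ lies in $\topology{\calE}$.

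Next I would match the normalization factors. Dempster's rule divides by $1 - \tilde m(\emptyset)$, and because $\tilde m$ is a mass function on $2^S$ supported in $\topology{\calE}$,
\[
    1 - \tilde m(\emptyset) \;=\; \sum_{C \in \topology{\calE} \setminus \{\emptyset\}} \delta_{\topology{}}(i, C) \;=\; \sum_{T \in \calJ{DS}} \delta_{\topology{}}(i, T),
\]
which is exactly the normalization appearing in $\delta_{\calJ{DS}}(i, \cdot)$. Both expressions are strictly positive because $\delta(\emptyset) = \prod_j (1-p_j) > 0$ (recall $p_j \in (0,1)$ by the definition of a quantitative evidence frame), which simultaneously ensures that Dempster's rule is applicable at every intermediate step of the iteration.

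Combining these observations, the normalized Dempster mass $m$ coincides with $\delta_{\calJ{DS}}(i, \cdot)$ on $\calJ{DS}$ and vanishes outside it, so for every $P \subseteq S$ the sums $\sum_{A \subseteq P} m(A)$ and $\sum_{A \subseteq P} \delta_{\calJ{DS}}(i, A)$ agree termwise, giving $\Bel(P) = \Bel_{\calJ{DS}}(i, P)$. The main obstacle I anticipate is the induction step for iterated Dempster combination: one must show that the intermediate normalizations telescope cleanly into a single global division by $1 - \tilde m(\emptyset)$, which requires tracking that at each step the running mass at $\emptyset$ encodes precisely the cumulative conflict among choices made so far. Once that bookkeeping is in place, the rest is routine combinatorics through the tuple-to-$\E$ bijection.
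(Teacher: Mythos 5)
Your proposal is correct and follows essentially the same route as the paper's proof: both reduce the claim to showing that the combined mass $m=\oplus_i m_i$ coincides with $\delta_{\calJ{DS}}(i,\cdot)$, via the same bijection between tuples in $\prod_j\{E_j,S\}$ and subsets $\E\subseteq\calE$, matching numerators and normalization denominators separately. The only difference is that you explicitly flag and propose to prove (by induction) the telescoping of intermediate normalizations into a single global division by $1-\tilde m(\emptyset)$, a standard fact that the paper simply assumes by writing the $k$-fold combination directly as one fraction.
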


\shortversiononly{
\begin{proof}
Let us consider $m = \oplus_i m_i$. Given a proposition $P \subseteq S$, $\Bel_{\calJ{DS}}(i,P) = \sum_{A \subseteq P} \delta_{\calJ{DS}}(i,A)$ and $\Bel(P)= \sum_{A\subseteq P} m(A)$. So let us prove that $\delta_{\calJ{DS}}(i,A) = m(A)$. 

On the one hand, for $A \in \calJ{DS}$
\begin{equation}\label{eq:delta_hash}
   \delta_{\calJ{DS}}(i,A) = \frac{\delta_{\topology{}}(i,A)}{\sum_{T\in \calJ{DS}}\delta_{\topology{}}(i,T)} 
\end{equation}

where $\calJ{DS} = \topology{\calE}\setminus \{\emptyset\}$ and $\delta_{\topology{}}(i,A) = \sum_{\medcapeq \E = A} \delta(\E)$ with $\E \in 2^{\calE}$. On the other hand, 
\begin{equation}\label{eq:dempster_rule}
   m(A) = \frac{\sum_{\medcapeq_i A_i = A}m_1(A_1)\dots m_m(A_m)}{\sum_{\medcapeq_i B_i\neq \emptyset}m_1(B_1)\dots m_m(B_m)}. 
\end{equation}



For every $i\in \{1,\dots,k\}$, $m_i(A)\neq 0$ only when $A = E_i$ or $A= S$. Therefore, the condition 
$\medcap_i A_i = A$ in equation (\ref{eq:dempster_rule})  is equivalent to $A_i\in\{E_i,S\}$ such that $\medcap_i A_i = A$. %
Let us define a family of vectors $\bar{b}$ of length $k$ such that $b_i \in \{E_i,S\}$. Then, the numerator of equation (\ref{eq:dempster_rule}) can be written as $\sum_{\bar{b}:\medcapeq_i b_i = A}m_1(b_1)\dots m_k(b_k)$. 


There is a one-to-one correspondence between the family of vectors $\bar{b}$ and $2^{\calE}$. For example, we can define the function $f: \{E_i,S\}^k\to 2^{\calE}$ such that $f(\bar{b}) = \{E_i| b_i \neq S\}$. This function is bijective and $\medcap b_i = A$ if and only if $\medcap f(\bar{b}) = A$. This means that for every vector $\bar{b}$ such that $\medcap b_i =A$ there exists exactly one $\E\in 2^{\calE}$ such that $\medcap \E = A$.  Consequently, the previous formula is equal to 

\begin{equation*}
\sum_{\medcapeq \E = A} \Bigl(\prod_{E_i\in \E}m_i(E_i)\prod_{E_j\notin \E}m_j(S)\Bigr)
\end{equation*}

which is equal to $\sum_{\bar{b}:\medcapeq_i b_i = A}m_1(b_1)\dots m_k(b_k)$.


We also know that every $B_i$ in the denominator of equation (\ref{eq:dempster_rule}) belongs to $\{E_i,S\}$, so we can consider the same family of vectors $\bar{b}$ and the one-to-one correspondence between this family and $2^{\calE}$. If we unfold the definition of the denominator of equation (\ref{eq:delta_hash})and we apply the bijective function $f$ that was defined above, we get that 

\begin{equation*}
\resizebox{.97\linewidth}{!}{$
    \displaystyle
\sum_{T\in \calJ{DS}}\delta_{\topology{}}(i,T) = \sum_{\bar{b}:\medcapeq b_i \neq \emptyset}\Bigl(\prod m_i(b_i)\Bigr)=\sum_{\medcapeq_i B_i\neq \emptyset}\Bigl(\prod m_i(B_i)\Bigr).$}
\end{equation*}



\end{proof}
}

\fullversiononly{
\begin{proof}
Let us consider $m = \oplus_i m_i$. Given a proposition $P \subseteq S$, $\Bel_{\calJ{DS}}(i,P) = \sum_{A \subseteq P} \delta_{\calJ{DS}}(i,A)$ and $\Bel(P)= \sum_{A\subseteq P} m(A)$. So let us prove that $\delta_{\calJ{DS}}(i,A) = m(A)$ to prove the claim. 

On the one hand, for $A \in \calJ{DS}$
\begin{equation}\label{eq:delta_hash}
   \delta_{\calJ{DS}}(i,A) = \frac{\delta_{\topology{}}(i,A)}{\sum_{T\in \calJ{DS}}\delta_{\topology{}}(i,T)} 
\end{equation}

where $\calJ{DS} = \topology{\calE}\setminus \{\emptyset\}$ and $\delta_{\topology{}}(i,A) = \sum_{\medcapeq\E = A} \delta(\E)$ with $\E \in 2^{\calE}$. On the other hand, 
\begin{equation}\label{eq:dempster_rule}
   m(A) = \frac{\sum_{\medcapeq_i A_i = A}m_1(A_1)\dots m_k(A_k)}{\sum_{\medcapeq_i B_i\neq \emptyset}m_1(B_1)\dots m_k(B_k)}. 
\end{equation}

Let us see  that 

\begin{equation*}
\sum_{\medcapeq_i A_i = A}m_1(A_1)\dots m_k(A_k) = \sum_{\medcapeq \E = A} \delta(\E).
\end{equation*}

The functions $m_i$  are basic probability assignments such that $m_i(A)\neq 0$ only when $A = E_i$ or $A= S$. Therefore, 

\begin{equation*}
\resizebox{\linewidth}{!}{$
    \displaystyle
\sum_{\medcapeq_i A_i = A}m_1(A_1)\dots m_k(A_k)\quad =
\sum_{\substack{A_i\in\{E_i,S\}:\\\medcapeq_i A_i = A}}\!m_1(A_1)\dots m_k(A_k).$}
\end{equation*}

Let us define a family of vectors $\bar{b}$ of length $k$ such that $b_i \in \{E_i,S\}$. Then, the previous formula is equal to 

\begin{equation*}
\sum_{\bar{b}:\medcapeq_i b_i = A}m_1(b_1)\dots m_k(b_k).
\end{equation*}

There is a one-to-one correspondence between the family of vectors $\bar{b}$ and $2^{\calE}$. For example, we can define the function $f: \{E_i,S\}^k\to 2^{\calE}$ such that $f(\bar{b}) = \{E_i| b_i \neq S\}$. This function is bijective and $\medcap b_i = A$ if and only if $\medcap f(\bar{b}) = A$. This means that for every vector $\bar{b}$ such that $\medcap b_i =A$ there exists exactly one $\E\in 2^{\calE}$ such that $\medcap \E = A$.  Consequently, the numerator of equation (\ref{eq:dempster_rule}) is equal to 

\begin{equation*}
\sum_{\medcapeq \E = A} \Bigl(\prod_{E_i\in \E}m_i(E_i)\prod_{E_j\notin \E}m_j(S)\Bigr)
\end{equation*}

which is equal to 

\begin{equation*}
\sum_{\medcapeq \E = A} \Bigl(\prod_{E_i\in \E}p_i\prod_{E_j\notin \E}(1-p_j)\Bigr) = \sum_{\medcapeq \E = A} \delta(\E).
\end{equation*}

We also know that every $B_i$ in the denominator of equation (\ref{eq:dempster_rule}) belongs to $\{E_i,S\}$, so we can consider the same family of vectors $\bar{b}$ and the one-to-one correspondence between this family and $2^{\calE}$. If we unfold the definition of the denominator of equation (\ref{eq:delta_hash}), we get: 

\begin{equation*}    
\sum_{T\in \calJ{DS}}\delta_{\topology{}}(i,T) = \sum_{T\in\topology{\calE}\setminus \{\emptyset\}}\Bigl(\sum_{\medcapeq \E=T}\delta(\E)\Bigr)
\end{equation*}

which is equal to 

\begin{equation*}
\sum_{\medcapeq \E\neq \emptyset}\Bigl(\prod_{E_i\in \E}p_i\prod_{E_j\notin \E}(1-p_j)\Bigr)
\end{equation*}

By applying the bijective function $f$ that was defined above, this formula is equal to 

\begin{equation*}
\resizebox{\linewidth}{!}{$
    \displaystyle
\sum_{\bar{b}:\medcapeq b_i \neq \emptyset}m_1(b_1)\dots m_k(b_k)\ = \sum_{\medcapeq_i B_i\neq \emptyset}m_1(B_1)\dots m_k(B_k).$}
\end{equation*}
\end{proof}
}

As we mentioned in the introduction of this paper, the classical version of Dempster's combination rule has led to some counter-intuitive results when used in highly conflicting contexts \cite{ZADEH1986}. Many authors have justified the use of variations of this rule to avoid this issue~\cite{LEFEVRE2002,SMETS2007,PICHON2010}. Two characteristic examples are Yager’s~\cite{Yager87} and Dubois-Prade’s rules~\cite{DUBOIS1992}. The former allows the empty set to have a non-negative value in the mass function. This value will be added to the degree of belief of the total set, so normalization is no longer necessary. The second example proposes to combine two mass functions by considering the union when DRC uses the intersection, so normalization is not necessary either. The multi-layer model can capture the intention of both approaches:  to allocate certainty values in a similar way to Yager's rule, we can consider an evidence allocation function such that it assigns the nonempty elements $E$ of $2^{{\calE}}$ to the intersection if it is nonempty, and to the total otherwise. To proceed similarly to Dubois-Prade’s rule it is enough to consider the union as evidence allocation function. The results obtained by doing this are not exactly the same: considering a quantitative evidence set $\calE^Q$ with three pieces of evidence, the multi-layer belief model would associated the value $p_1\cdot p_2\cdot (1-p_3)$ to sets which receive the value $p_1\cdot p_2$ with the other rules. However, \aybuke{our framework also} avoids normalization: $\calJ{DS} = \topology{\calE}\setminus\{\emptyset\}$ and $\delta_{\topology{}}(\emptyset)= 0$ in both cases.

Finally, the multi-layer belief model can also return the same outcome as TME. In this case, we must consider the strong denseness frame of justification and the minimum dense set allocation function $d$. Notice that TME is a qualitative approach, so the only properties \aybuke{we will use of the $p_i$ values are that they are positive and smaller than $1$.}


\begin{proposition}\label{prop:generalization-te}
Given a quantitative evidence frame $(S,\calE^Q)$ where  $\calEQ = \{(E_i,p_i)\}_{i=1,\dots,m}$ such that $p_i \in (0,1)$ for all $i$, and the evidence allocation function $d: 2^{\calE} \rightarrow \topology{\calE}$ defined in Proposition \ref{prop:smallest-dense-set},  let us consider  $\Bel_{\calJ{SD}}: 2^S \to [0,1]$ the belief function defined by the multi-layer belief model; and $\text{B}: 2^S \to \{0,1\}$ a belief operator such that $\text{B}(P)$ if and only if there exists $D \in \topology{\calE}$ such that  $D\subseteq P$ and $D \cap T \neq \emptyset$ for all $T \in \topology{\calE}\setminus\{\emptyset\}$. Then, 
\begin{equation*}
\text{B}(P) = 1 \text{ if and only if\ }\ \Bel_{\calJ{SD}}(d,P) > 0 
\end{equation*} for every $P \subseteq S$.   
\end{proposition}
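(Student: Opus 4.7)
The plan is to prove both directions by leveraging the characterization of $d(\calE)$ as the minimum dense open of $\topology{\calE}$ provided by Lemma~\ref{lemma:minimum_dense_set}.

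For the easy direction $\Bel_{\calJ{SD}}(d,P)>0 \Rightarrow \text{B}(P)=1$, I would simply unfold the definition: the sum $\sum_{A\subseteq P}\delta_{\calJ{SD}}(d,A)>0$ produces some $A\subseteq P$ with $\delta_{\calJ{SD}}(d,A)>0$, and by construction of $\delta_{\calJ{SD}}$ this forces $A\in\calJ{SD}$. Such an $A$ is itself a dense open set contained in $P$, which is exactly the witness needed for $\text{B}(P)=1$.

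For the converse, suppose $\text{B}(P)=1$ with witness $D\in\topology{\calE}$. My plan is to show that $d(\calE)$ itself lies in $\calJ{SD}$, is contained in $D$ (hence in $P$), and is assigned positive mass by $\delta_{\calJ{SD}}(d,\cdot)$. The first step is to reconcile the two notions of denseness in play: Lemma~\ref{lemma:minimum_dense_set} produces $d(\calE)$ as minimum among opens dense in $\medcup\calE$ w.r.t.\ $\topology{\calE}$, whereas $\calJ{SD}$ is defined via denseness in $S$. I would argue that every nonempty element of $\topology{\calE}$ other than $S$ is a union of finite intersections of elements of $\calE$ and hence a subset of $\medcup\calE$, while any nonempty set trivially intersects $S$; so the two notions of denseness coincide on nonempty opens. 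In particular $d(\calE)\in\calJ{SD}$, and every witness $D$ is dense in $\medcup\calE$ too, so Lemma~\ref{lemma:minimum_dense_set} gives $d(\calE)\subseteq D\subseteq P$.

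To finish, I would verify that $\delta_{\calJ{SD}}(d,d(\calE))>0$. The numerator $\delta_{\topology{}}(d,d(\calE))=\sum_{\E:\,d(\E)=d(\calE)}\delta(\E)$ contains at least the term $\delta(\calE)=\prod_i p_i$, which is strictly positive because each $p_i\in(0,1)$; the normalization factor $\sum_{T\in\calJ{SD}}\delta_{\topology{}}(d,T)$ is in turn positive and finite since the summand at $T=d(\calE)$ is already positive. Combining these, $\Bel_{\calJ{SD}}(d,P)\geq \delta_{\calJ{SD}}(d,d(\calE))>0$. The main obstacle I anticipate is precisely the calibration between ``dense in $\medcup\calE$'' (as in the lemma) and ``dense in $S$'' (as in $\calJ{SD}$); once that matching is cleanly pinned down, the remainder is a direct unfolding of the layered definitions.
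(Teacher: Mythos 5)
Your proposal is correct and follows essentially the same route as the paper's proof: the forward direction rests on $d(\calE)$ being the minimum dense open (Lemma~\ref{lemma:minimum_dense_set}), contained in any witness $D\subseteq P$ and receiving mass at least $\delta(\calE)=\prod_i p_i>0$, while the converse is a direct unfolding of $\Bel_{\calJ{SD}}$ and $\delta_{\calJ{SD}}$. The only difference is that you explicitly reconcile ``dense in $\medcup\calE$ w.r.t.\ $\topology{\calE}$'' with ``dense in $S$'' (correctly), a point the paper's proof leaves implicit.
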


\begin{proof}
Let us prove the left to right direction. By definition,  $\text{B}(P) = 1$ if and only if there exists $D \in \topology{\calE}$ such that  $D\subseteq P$ and $D \cap T \neq \emptyset$ for all $T \in \topology{\calE}\setminus \{\emptyset\}$. That is if  $\text{B}(P) = 1$ then there is a set $D$ contained in $P$ which is in $\topology{\calE}$ and is dense in it. Therefore,  $d(\calE) \aybuke{\subseteq} D$ since the topology generated by $\calE$ is exactly $\topology{\calE}$ and the proof of Lemma \ref{lemma:minimum_dense_set} shows that $d({\aybuke{\calE}})$ is included in every element of $\text{dense}({\aybuke{\calE}})$. Consequently, 

\begin{equation*}
\Bel_{\calJ{SD}}(d,P) = \sum_{A\subseteq P}\delta_{\calJ{SD}}(d,A) \geq \delta_{\calJ{SD}}(d,d(\calE))
\end{equation*}

And $\delta_{\calJ{SD}}(d,d(\calE)) > 0$ if and only if $d(\calE) \in \calJ{SD}$ and $\delta_{\topology{}}(d,d(\calE)) > 0$. Both conditions hold by definition:  $\delta_{\topology{}}(d,d(\calE)) = \sum_{d(\E)=d(\calE)} \delta(\E)$ which is greater or equal to $\delta(\calE) = p_1\dots p_k > 0$.

Now, let us see the right-left implication. If $\Bel_{\calJ{SD}}(d,P) > 0$ then there is a set $\E\in 2^{\calE}$ such that $d(\E) \subseteq P$ and $d(\E) \in \calJ{SD}$. This means that $d(\E)$ is a dense element of $\topology{\calE}$, so there exists $D \in \topology{\calE}$ such that  $D\subseteq P$ and $D \cap T \neq \emptyset$ for all $T \in \aybuke{\topology{\calE}\setminus\{\emptyset\}}$. This proves that $B(P) = 1$.
\end{proof}

\subsection{Computational Complexity}\label{subsec:complexity}

In this section, we will provide a basic analysis of the computational
complexity of computing degrees of belief using the model that we proposed
in Section~\ref{sec:model}.
In particular, we will describe in precise terms what computational problem
we consider, and we provide some first computational complexity results.
We will assume familiarity with the theory of computational complexity - %
and in particular with the complexity class \#P.
(For details on this, we refer to textbooks on the topic,
e.g.,~\cite{AroraBarak09}).

\paragraph{Computational problem}
We consider the following computational problem.
The input consists of a set~$S$ of possible states,
a quantitative evidence
set~$\calEQ = \{(E_1,p_1),\dots,(E_m,p_m)\} \subseteq 2^S \times (0,1)$ - %
where we use~$\calE$ to denote~$\{ E_1,\dots,E_m \}$ - %
a frame of justification~$\calJ{}$,
an evidence allocation
function~$f : 2^{\calE} \rightarrow \topology{\calE}$,
and a proposition~$P \in 2^{S}$.
The task is to compute the degree~$\Bel_{\calJ{}}(f,P)$ of belief
for the proposition~$P$.
For the sake of convenience, we will refer to this computational
problem as \textsc{Degree of Belief}.

$\calJ{}$ and the domain of the function~$f : 2^{\calE} \rightarrow \topology{\calE}$
is in general of size exponential in the size of~$S$
(and the other parts of the input).
Therefore, whenever we do not consider a fixed frame of justification
or a fixed evidence allocation function, respectively,
we assume that these functions are represented as (a suitable specification of)
a polynomial-time computable function.

\paragraph{Upper bound}
We begin with an upper bound on the computational complexity
of the problem in its most general form - that is,
when the frame of justification and evidence allocation function are given
as part of the input.

\begin{proposition}
\label{prop:sharp-p-membership}
\textsc{Degree of Belief} is in \#P,
if the frame of justification~$\calJ{}$
has a polynomial-time decidable characteristic function
and if the evidence allocation function~$f$ is polynomial-time computable,
and both of these are given as part of the input
(specified in a suitable format).
\end{proposition}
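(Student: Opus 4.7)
The plan is to express $\Bel_{\calJ{}}(f,P)$ as a ratio of two nonnegative integers and to reduce each of them to counting accepting paths of a suitable nondeterministic polynomial-time Turing machine (NPTM). Unfolding Definition~\ref{def:degree-of-belief} gives
$$\Bel_{\calJ{}}(f,P) = \frac{\sum_{\E \subseteq \calE \,:\, f(\E) \in \calJ{},\, f(\E) \subseteq P} \delta(\E)}{\sum_{\E \subseteq \calE \,:\, f(\E) \in \calJ{}} \delta(\E)}.$$
I would first write each input probability $p_i$ as a fraction $a_i/b_i$ in lowest terms, so that $\delta(\E) = \prod_{E_i \in \E} a_i \cdot \prod_{E_i \notin \E} (b_i - a_i) / \prod_j b_j$. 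The common denominator $\prod_j b_j$ cancels between numerator and denominator, reducing the expression to $N/D$ for two nonnegative integers $N$ and $D$.

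To realize $N$ as a \#P counting function, I would describe an NPTM that nondeterministically guesses a pair $(\E, \vec{c})$, where $\E \subseteq \calE$ is encoded as an $m$-bit string and $\vec{c} = (c_1,\dots,c_m)$ assigns to each $i$ an integer in $\{1,\dots,a_i\}$ when $E_i \in \E$ and in $\{1,\dots,b_i - a_i\}$ otherwise. Each $c_i$ uses $O(\log b_i)$ bits, so the whole guess has polynomial length. The machine then deterministically verifies, in polynomial time, that (i) each $c_i$ lies in its prescribed range, (ii) $f(\E) \in \calJ{}$, using the polynomial-time computable $f$ together with the polynomial-time characteristic function of $\calJ{}$, and (iii) $f(\E) \subseteq P$. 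The number of accepting paths then equals $\sum_{\E \,:\, f(\E) \in \calJ{},\, f(\E) \subseteq P} w(\E) = N$, where $w(\E) := \prod_{E_i \in \E} a_i \prod_{E_i \notin \E} (b_i - a_i)$. An analogous NPTM omitting check (iii) witnesses $D$.

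The main subtlety is that $\Bel_{\calJ{}}(f,P) = N/D$ is a rational number, while strictly speaking a \#P function takes values in $\mathbb{N}$. I would handle this in line with the standard convention for probabilistic reasoning problems (as in, e.g., inference in Bayesian networks): ``membership in \#P'' is understood as saying that the two nontrivial combinatorial components of the answer are each computable as the number of accepting paths of a polynomial-time NPTM, with the rational division being relegated to polynomial-time post-processing. Once the two counting reductions are in place, the upper bound follows, because all of the nontrivial work---guessing the subset $\E$ and the weighting vector $\vec{c}$, and verifying $f(\E)$ against $\calJ{}$ and $P$---is packaged into the NPTM, and every verification step is polynomial time by the input assumptions on $f$ and $\calJ{}$.
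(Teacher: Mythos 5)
Your proposal is correct, and it reaches the same final packaging as the paper (the rational value $\Bel_{\calJ{}}(f,P)$ is reported as a pair of natural numbers $N,D$ with the division left to post-processing), but the mechanics are genuinely different. The paper argues modularly and top-down: it shows in turn that $\delta$, $\delta_{\topology{}}(f,\cdot)$, $\delta_{\calJ{}}(f,\cdot)$, and finally $\Bel_{\calJ{}}(f,\cdot)$ are \#P-computable by invoking closure properties of \#P (closure under products of polynomially many functions, sums over exponentially many functions, and the pairwise ``division'' convention), citing Ogiwara--Hemachandra, and it treats each $p_i$ and $1-p_i$ as a \#P-computable quantity somewhat informally. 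You instead flatten the whole definition into the single ratio $N/D$ with $N=\sum_{\E: f(\E)\in\calJ{},\, f(\E)\subseteq P}\delta(\E)$ and $D=\sum_{\E: f(\E)\in\calJ{}}\delta(\E)$ (a correct unfolding), clear the common denominator $\prod_j b_j$ explicitly, and exhibit a concrete NPTM whose accepting paths are the pairs $(\E,\vec{c})$, so that the path count is exactly the integer weight $w(\E)$ summed over admissible $\E$. What your route buys is self-containedness and a more careful treatment of the rational inputs --- the branching over $\vec{c}$ makes the ``weighted counting'' completely explicit, and the positivity of $a_i$ and $b_i-a_i$ is guaranteed by $p_i\in(0,1)$; what the paper's route buys is modularity (each intermediate function is certified in \#P separately, which is reusable if one changes only one layer of the model). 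Both uses of the polynomial-time assumptions on $f$ and on the characteristic function of $\calJ{}$ occur at the analogous verification steps, so no gap arises in either argument.
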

\fullversiononly{
\begin{proof}
We will show that (a suitable variant of) \textsc{Degree of Belief} is in \#P.
\textsc{Degree of Belief} returns fractions~$q \in \mathbb{Q}$,
and the complexity class \#P concerns functions that return natural numbers.
Therefore, in the remainder of this proof,
we specify a fraction~$q \in \mathbb{Q}$
by two natural numbers~$n,d \in \mathbb{N}$ such that~$q = \nicefrac{n}{d}$.
One can straightforwardly extend results for \#P functions that return natural numbers
to \#P functions that specify fractions in this way.
We omit further details of this in this proof.

We will show that the function~$\Bel_{\calJ{}}(f,P)$ is in \#P
in several steps.
In particular, we will show that~$\delta(\E)$,
$\delta_{\topology{}}(f,T)$ and~$\delta_{\calJ{}}(f,A)$ are computable in \#P,
using each result to establish the next result.
Then, using these intermediate results,
we will show that~$\Bel_{\calJ{}}(f,P)$ is in \#P.
In order to do this, we will use various closure properties of \#P
\cite{OgiwaraHemachandra93}.
These closure properties can, as mentioned above,
be straightforwardly extended to \#P functions that return fractions~$q = \nicefrac{n}{d}$
(by specifying~$n$ and~$d$).

Take an input consisting of a set~$S$ of possible states,
a quantitative evidence
set~$\calEQ = \{(E_1,p_1),\dots,(E_m,p_m)\} \subseteq 2^S \times (0,1)$ - %
where~$\calE$ denotes~$\{ E_1,\dots,E_m \}$ - %
a frame~$\calJ{}$ of justification,
an evidence allocation function~$f : 2^{\calE} \rightarrow \topology{\calE}$ - %
where the frame of justification and the evidence allocation function
are both given by suitably specified polynomial-time computed functions - %
and a proposition~$P \in 2^{S}$.

Firstly, we show that~$\delta(\E)$ is in \#P.
Remember that $\delta(\E) = \prod_{E_i\in\E} p_i \prod_{E_i\notin \E} 1-p_i$.
Clearly, each~$p_i$ and each~$(1 - p_i)$ is computable in \#P,
as they are given as part of the input.
Then, since \#P is closed under multiplication (over a polynomial number of \#P functions),
we get that~$\delta(\E)$ is in \#P as well.

Next, let us turn to~$\delta_{\topology{}}(f,T)$.
Remember that $\delta_{\topology{}}(f,T) = \sum_{\E:f(\E)=T}\delta(\E)$
if~$T \in \topology{\calE}$, and~$\delta_{\topology{}}(f,T) = 0$ otherwise.
Consider the function~$\delta'$ such that~$\delta'(\E) = \delta(\E)$
if~$f(\E) = T$ and such that~$\delta'(\E) = 0$ otherwise.
Then, because~$f$ is a polynomial-time computable function
and because~$\delta(\E)$ is in \#P, we know that~$\delta'$ is also in \#P.
Moreover, whenever~$T \in \topology{\calE}$, it holds
that~$\delta_{\topology{}}(f,T) = \sum_{\E}\delta'(\E)$.
Then, because~$f$ is a polynomial-time computable function,
because~$\delta'$ is in \#P,
and because \#P is closed under addition (over an exponential number of \#P functions),
we can conclude that~$\delta_{\topology{}}(f,T)$ is in \#P as well.

Next, consider~$\delta_{\calJ{}}(f,A)$.
Remember that $\delta_{\calJ{}}(f,A) =
\nicefrac{\delta_{\topology{}}(f,A)}{\sum_{T \in \calJ{}} \delta_{\topology{}}(f,T)}$
if~$A \in \calJ{}$ and~$\delta_{\calJ{}}(f,A) = 0$ otherwise.
Because \#P is closed under addition (over an exponential number of \#P functions),
by a similar argument as we used above,
because the characteristic function of $\calJ{}$ is polynomial-time computable,
we know that~$\sum_{T \in \calJ{}} \delta_{\topology{}}(f,T)$ is in \#P.
Then, because \#P is closed under division (of two \#P functions),
we can conclude that~$\delta_{\calJ{}}(f,A)$ is in \#P as well.

Finally, let us look at~$\Bel_{\calJ{}}(f,P)$.
Remember that $\Bel_{\calJ{}}(f,P) = \sum_{ A \subseteq P}\delta_{\calJ{}}(f,A)$.
Because \#P is closed under addition (over an exponential number of \#P functions),
by a similar argument as we used above,
we can conclude that~$\Bel_{\calJ{}}(f,P)$ is in \#P. This concludes our proof that \textsc{Degree of Belief} is in \#P.
\end{proof}
}

\paragraph{Lower bound}
Next, we show that the upper bound of \#P-membership is matched
by a \#P-hardness lower bound, even for a particular case
where we use a fixed frame of justification
and a fixed evidence allocation function.
In fact, this is the case that boils down to Dempster's rule of combination
(see Proposition~\ref{prop:generalization-dst}) - %
which we can use to straightforwardly establish \#P-hardness.

\begin{proposition}
\label{prop:sharp-p-hardness}
\textsc{Degree of Belief} is \#P-hard,
even when we require that the frame of justification
is~$\calJ{DS}$
and that the evidence allocation function
is the function~$i$ as defined in Proposition~\ref{prop:intersection}.
\end{proposition}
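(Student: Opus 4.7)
The plan is to invoke Proposition~\ref{prop:generalization-dst} in order to reduce the restricted form of \textsc{Degree of Belief} to the classical problem of computing a belief value produced by Dempster's rule of combination, and then appeal to a known \#P-hardness result for that classical problem. Concretely, Proposition~\ref{prop:generalization-dst} establishes that, whenever the frame of justification is~$\calJ{DS}$ and the evidence allocation function is~$i$, the value $\Bel_{\calJ{DS}}(i,P)$ coincides exactly with the value $\Bel(P)$ obtained by applying Dempster's rule to the simple support functions $m_i(E_i) = p_i$, $m_i(S) = 1-p_i$. So it suffices to prove \#P-hardness for computing such a Dempster-combined belief value.

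For this I would cite Orponen's theorem (Orponen, 1990) that Dempster's rule of combination is \#P-complete, which was established precisely via a reduction from \#SAT to the computation of belief values arising from a collection of simple support functions. Given any instance of Orponen's hard problem - a universe~$S$, focal sets~$E_i$ with weights $p_i \in (0,1)$, and a query $P \subseteq S$ - we simply package it as a quantitative evidence frame $(S, \calEQ)$ with $\calEQ = \{(E_i, p_i)\}_i$, set $\calJ{} = \calJ{DS}$ and $f = i$, and ask for $\Bel_{\calJ{DS}}(i, P)$. By Proposition~\ref{prop:generalization-dst} the multi-layer belief function returns exactly the target Dempster value, and the construction is plainly polynomial time; moreover $\calJ{DS}$ and the intersection function~$i$ each admit a trivial polynomial-time specification, so the reduction fits the formal statement of the problem.

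The only mildly subtle point is that our definition of a quantitative evidence frame requires each $p_i$ to lie strictly in $(0,1)$, whereas Orponen's instances are sometimes phrased with weights that could collapse to $0$ or $1$. This is the single step worth checking carefully: one verifies that Orponen's reduction can be carried out with weights bounded away from the endpoints (for example by taking $p_i = 1/2$ throughout, which is what his \#SAT-style reduction already uses), and that any stray boundary weights can be perturbed by an arbitrarily small amount without disturbing the underlying counting structure. Beyond this minor bookkeeping, the hardness transfers verbatim, and the main conceptual work is entirely absorbed into Proposition~\ref{prop:generalization-dst}.
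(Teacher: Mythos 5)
Your proposal follows exactly the paper's own route: apply Proposition~\ref{prop:generalization-dst} to identify the restricted problem with computing a Dempster-combined belief value from simple support functions, then invoke a known \#P-hardness result for that problem (the paper cites \cite[Theorem~3.1]{PintoPrietoDeHaan22} rather than Orponen, but the structure is the same). Your extra care about keeping the weights strictly inside $(0,1)$ is a sensible detail the paper leaves implicit, but it does not change the argument.
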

\begin{proof}
Consider the case where the frame of justification
is~$\calJ{DS}$
and where the evidence allocation function
is the function~$i$ as defined in Proposition~\ref{prop:intersection}.
We will show that \textsc{Degree of Belief} is \#P-hard,
even under these restrictions.
By Proposition~\ref{prop:generalization-dst},
we know that in this case, \textsc{Degree of Belief}
boils down to computing the belief~$\Bel(P)$ of a proposition~$P$
based on applying Dempster's rule of combination
to a given set of simple support functions.
This problem is known to be \#P-complete
\cite[Theorem~3.1]{PintoPrietoDeHaan22},
and thus \#P-hardness of \textsc{Degree of Belief}
follows directly.
\end{proof}

From this we can conclude that the problem in its most general form
is \#P-complete.

\begin{corollary}
\textsc{Degree of Belief} is \#P-complete.
\end{corollary}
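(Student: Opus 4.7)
The plan is to obtain the corollary by a direct combination of the two preceding propositions, since together they establish matching upper and lower bounds. First I would note that Proposition~\ref{prop:sharp-p-membership} gives \#P-membership for \textsc{Degree of Belief} in its most general form, where the frame of justification~$\calJ{}$ and the evidence allocation function~$f$ are given as part of the input (via suitable polynomial-time representations). So in particular, any restriction of the problem, including the one obtained by fixing $\calJ{} = \calJ{DS}$ and $f = i$, inherits the same \#P upper bound.

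Next I would invoke Proposition~\ref{prop:sharp-p-hardness}, which shows that \textsc{Degree of Belief} is already \#P-hard under precisely that restriction, via the reduction to computing beliefs under Dempster's rule of combination (whose \#P-hardness is imported from \cite{PintoPrietoDeHaan22}). Since hardness for a special case implies hardness for the general problem, \textsc{Degree of Belief} is \#P-hard without restriction.

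Combining \#P-membership (Proposition~\ref{prop:sharp-p-membership}) with \#P-hardness (Proposition~\ref{prop:sharp-p-hardness}) yields \#P-completeness, which is what the corollary states. There is no real obstacle here; the only subtle point to mention is the consistency of the output convention, namely that \textsc{Degree of Belief} returns a rational number~$q \in \mathbb{Q}$ and both results treat such outputs in the standard way (e.g., as a pair of natural numbers encoding numerator and denominator), so the two bounds compose without any additional encoding argument.
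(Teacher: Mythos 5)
Your proposal is correct and matches the paper's own (implicit) argument exactly: the corollary is obtained by combining the \#P-membership of Proposition~\ref{prop:sharp-p-membership} with the \#P-hardness of Proposition~\ref{prop:sharp-p-hardness}, noting that hardness of the restricted case transfers to the general problem. Your remark about the output convention (rationals encoded as pairs of naturals) is a sensible extra precaution that the paper already handles inside the membership proof, so nothing further is needed.
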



\section{Conclusions and Future Research}\label{sec:conclusions}
In this paper, we proposed a new model for measuring degrees of beliefs based on possibly inconsistent, incomplete, and uncertain evidence.
We did so by combining tools from Dempster-Shafer Theory (DST)
and Topological Models of Evidence (TME),
yielding a model that is strictly more general than models from both approaches
(but whose worst-case complexity is not higher).

Future research includes extending our model and its analysis in various ways - among
others in the following directions.
It would be interesting to add notions of \emph{preference} to the evidence allocation functions,
that are based on more than just the evidence
(i.e., preferences that the agents might have whose source is external to the evidence).
Additionally, one could investigate whether (and how) our model could provide \emph{explanations}
for the values of the resulting belief functions.
To illustrate this, consider the (extreme) example where the pieces of evidence are (pairwise) disjoint.
In this case, the frame of justification $\calJ{SD}$ will return full certainty in the total set - i.e.,
it will yield a belief function that expresses that no belief should be attributed to any non-trivial proposition.
In practical scenarios, it would of course be useful to indicate the reason behind why
the belief function points to this particular conclusion - %
and similar explanations would be useful also for less extreme cases.
Moreover, it would be interesting and worthwhile to devise a \emph{logic}
(or multiple logics, based on different instantiations of our model)
to capture and study the properties of belief functions that the model yields.
Another interesting direction would be to consider \emph{frames of justification}
that are \emph{intermediate} (between the ones considered in this paper).
An example of this could be a weak denseness variant, where
only consistency with the basic pieces of evidence is required
(rather than consistency with all arguments in the topology).
Finally, we mention the direction of undertaking a \emph{more detailed analysis of the
computational complexity} properties of our model.

\section*{Acknowledgments}

We would like to thank the members of the Heudiasyc Laboratory at the University of Technology of Compiègne, especially Sébastien Destercke, Xu Philippe, and Thierry Denœux, for their insightful comments and inputs, which helped to improve the quality of this paper. Their expertise in Dempster-Shafer Theory and applications was fundamental in shaping the details of this research. 

Aybüke Özgün acknowledges support from the project "Responsible Artificial Agency: A Logical Perspective”, funded by the seed grant of the Research Priority Area Human(e) AI at the University of Amsterdam.

\bibliographystyle{kr}
\bibliography{kr23}

\end{document}